\colorlet{myorange}{red!30!yellow}
\newcommand{\michel}[1]{\textcolor{red}{{\itshape Michel:} #1}}
\renewcommand{\michel}[1]{}
\newcommand{\michelb}[1]{\textcolor{blue}{{\itshape Michel:} #1}}
\renewcommand{\michelb}[1]{}
\newtheorem{defn}{Definition}
\newtheorem{prop}{Proposition}
\newtheorem{corol}{Corollary}
\newtheorem{expl}{Example}
\newtheorem{model}{Model}
\newcommand{\ostar}{\mathbin{\mathpalette\make@circled\star}}
\newcommand{\make@circled}[2]{%
	\ooalign{$\m@th#1\smallbigcirc{#1}$\cr\hidewidth$\m@th#1#2$\hidewidth\cr}%
}
\newcommand{\smallbigcirc}[1]{%
	\vcenter{\hbox{\scalebox{0.77778}{$\m@th#1\bigcirc$}}}%
}
\title{A Theory of Independent Mechanisms for Extrapolation in Generative Models$^*$}
\author {
	% Authors
	Michel Besserve,\textsuperscript{\rm 1,2}
	R\'emy Sun,\textsuperscript{\rm 1,3,$\dagger$}
	Dominik Janzing\textsuperscript{\rm 1,$\dagger$}
	and
	Bernhard Sch\"olkopf\textsuperscript{\rm 1} \\
}
\begin{document}

%	\icmlaffiliation{is}{MPI for Intelligent Systems, T\"ubingen, Germany}
%	\icmlaffiliation{kyb}{MPI for Biological Cybernetics, T\"ubingen, Germany}
%	\icmlaffiliation{ensr}{ENS Rennes, France}	
%%	\icmlaffiliation{equal}{DJ contributed before joining, Amazon}
%	\icmlcorrespondingauthor{Michel Besserve}{michel.besserve@tuebingen.mpg.de}	

\maketitle
\begin{abstract}
Generative models can be trained to emulate complex empirical data, but are they useful to make predictions in the context of previously unobserved environments? An intuitive idea to promote such \textit{extrapolation} capabilities is to have the architecture of such model reflect a causal graph of the true data generating process, such that one can intervene on each node independently of the others. However, the nodes of this graph are usually unobserved, leading to overparameterization and lack of identifiability of the causal structure. We develop a theoretical framework to address this challenging situation by defining a weaker form of identifiability, based on the principle of \textit{independence of mechanisms}. We demonstrate on toy examples that classical stochastic gradient descent can hinder the model's extrapolation capabilities, suggesting independence of mechanisms should be enforced explicitly during training. Experiments on deep generative models trained on real world data support these insights and illustrate how the extrapolation capabilities of such models can be leveraged.
\end{abstract}
\renewcommand*{\thefootnote}{}
	\footnotetext{\hspace*{-.55cm}\textsuperscript{$\dagger\,\,$}DJ contributed before joining Amazon. RS contributed before joining Thales and Sorbonne University.}

\setcounter{footnote}{3}
\renewcommand*{\thefootnote}{\arabic{footnote}}
\section{Introduction}

Deep generative models such as Generative Adversarial Networks (GANs) \citep{goodfellow2014generative}, and
Variational Autoencoders (VAEs) \citep{kingma2013auto,rezende2014stochastic} are able to learn complex structured data such as natural images. However, once such a network has been trained on a particular dataset, can it be leveraged to simulate meaningful changes in the data generating process? 
Capturing the causal structure of this process allows the different mechanisms involved in generating the data to be intervened on independently, based on the principle of \textit{Independence of Mechanisms} (IM) \citep{janzing2010causal,LemeireJ2012,causality_book}.  IM reflects a foundational aspect of causality, related to concepts in several fields, such as superexogeneity in economics \citep{engle1983exogeneity}, the general concept of invariance in philosophy \citep{woodward} and \textit{modularity}. In particular, having the internal computations performed by a multi-layer generative model reflect the true causal structure of the data generating mechanism would thus endow it with a form of layer modularity, such that intervening on intermediate layers causes changes in the output distribution similar to what would happen in the real world. We call such ability \textit{extrapolation}, as it intuitively involves generalizing beyond the support of the distribution sampled during training, or its convex hull.% of observed samples. 

%Modularity is however typically not controlled by backpropagation-based training, as the choice of parameters is only driven by the network's input-output relation. In particular, the strong over-parameterization of deep networks recently reported \citep{cooper2018loss,draxler2018essentially,sagun2017empirical} suggests the optimization algorithm can achieve global optimum with parameters largely different from the true, possibly modular, data generating mechanism.

In this paper, \textbf{we focus on the challenging case where no additional variables, besides the samples from the data to generate, are observed} (in contrast with related work, as explained below). In this unsupervised setting, generative models are typically designed by applying successive transformations to latent variables, leading to a multi-layered architecture, where neither the latent inputs nor the hidden layers correspond to observed variables. We elaborate a general framework to assess extrapolation capabilities when intervening on hidden layer parameters with transformations belonging to a given group $\mathcal{G}$, leading to the notion of $\mathcal{G}$-genericity of the chosen parameters. We then show how learning based on stochastic gradient descent can hinder $\mathcal{G}$-genericity, suggesting additional control on the learning algorithm or the architecture is needed to enforce extrapolation abilities. Although we see our contribution as chiefly conceptual and theoretical, we use toy models and deep generative models trained on real world data to illustrate our framework.
%We use \textit{spectral independence} \citep{shajarisale2015} to quantify this entanglement between successive layers of a neural networks.
%\ifarxiv
%\else 
\subsubsection{Appendix.} Readers can refer to the technical appendix in the extended version of this paper\footnote{\url{https://arxiv.org/abs/2004.00184}} for supplemental figures, code resources,  symbols and acronyms (Table~1), all proofs (App.~A) %\ref{sec:proofs}
  and method details (App.~B).%\ref{sec:methods}).
\michel{check all appendices, proofs, symbols,... are refered correctly in main text}
%\fi
\subsubsection{Related Work.}
%Finding disentangled representations in deep generative models has received considerable attention (e.g. \citep{chen2016infogan,mathieu2016disentangling,kulkarni2015deep,higgins2016beta,chen2016infogan,mathieu2016disentangling,kulkarni2015deep,higgins2016beta}). Our approach contrasts with this line of research by looking at a form of ``functional'' disentanglement that applies to the parameters learned in the multiple layers of the network (typically the synaptic weights) instead of focusing on latent variables. 

Deep neural network have been leveraged in causal inference for learning causal graphs between observed variables \citep{lopez2016revisiting} and associated causal effects \citep{louizos2017causal,shalit2017estimating,kocaoglu2017causalgan,lachapelle2019gradient,zhu2019causal}. Our ultimate goal is more akin to the use of a causal framework to enforce domain adaptation \citep{zhang2013domain,zhang2015multi} and domain shift robustness of leaning algorithms, which has been done by exploiting additional information in the context of classification \citep{heinze2017conditional}. Broadly construed, this also relates to zero-shot learning \citep{lampert2009learning} and notions of extrapolations explored in the context of dynamical systems \citep{martius2016extrapolation}. As an intermediary step, unsupervised disentangled generative models are considered as a way to design data augmentation techniques that can probe and enforce the robustness of downstream classification tasks \citep{locatello2018challenging,higgins2016beta}. A causal (counterfactual) framework for such disentanglement has been proposed by \citet{besserve2018counterfactuals} that leverages the internal causal structure of generative models to generate meaningful changes in their output. In order to characterize and enforce such causal disentanglement properties, the IM principle has been exploited in empirical studies \citep{goyal2019recurrent,pmlr-v80-parascandolo18a} and 
its superiority to statistical independence has been emphasized \citep{besserve2018counterfactuals,locatello2018challenging}.  However, deriving a measure for IM is challenging in practice. Our work builds on the idea of \citet{besserve2018aistats} to use group invariance to quantify IM in a flexible setting and relate it to identifiability of the model in the absence of information regarding variables causing the observations. Another interesting direction to address identifiability of deep generative model is non-linear ICA, but typically requires observation of auxiliary variables \citep{hyvarinen2019nonlinear,khemakhem2020variational}. Finally, our investigation of  overparameterization relates to previous studies \citep{neyshabur2017geometry,zhang2016understanding}, notably arguing that Stochastic Gradient Descent (SGD) implements an \textit{implicit regularization} beneficial to supervised learning, while we provide a different perspective in the context of unsupervised learning and extrapolation. 

%\ifarxiv
%%\newpage
%\else
%\textbf{Symbols and acronyms.}
%
%\begin{tabular}{llc}
%	\scriptsize	Abbrev./Symbol & Name & Eq.
%	\\\hline
%	NF & Normalizing Flow &\\
%	$\circ$ & function composition& \ref{eq:genCompos}\\
%	$\ostar$ & circular convolution &\ref{eq:circonv} \\
%	$\boldsymbol{\theta}^*$ & true parameters&\\
%	$S_{\boldsymbol{\theta}^*}$ & solution set & \ref{eq:solSet} \\
%	COS / $S^\Omega_{\boldsymbol{\theta}^*}$ & Composed Over-parameterization Set &\ref{eq:COS}\\
%	$\mathcal{M}_{\boldsymbol{\theta}^*}$ & extrapolated class & \ref{eq:metaclass}\\
%	$\varphi$ & generic ratio & \ref{eq:genRatio}\\
%	TR & Trace Ratio & \ref{eq:TR}\\
%	SDR / $\rho$ & Spectral Density Ratio & \ref{eq:sdr}\\
%	$\odot$ & entrywise product & \ref{eq:circonv}
%\end{tabular}
%\fi

\section{Extrapolation in Generative Models}\label{sec:extrapol}
\subsection{\textit{FluoHair}: an Extrapolation Example in VAEs}
%\ifarxiv
%\begin{figure*}
%	\includegraphics[width=.7\linewidth]{figures/figFluoHair.pdf}
%	\hfill
%	\begin{minipage}[b]{.28\linewidth}
%	\caption{Illustration of \textit{FluoHair} extrapolation for a VAE face generator (schematized on the left). Transformations $g_k$ modify kernel $k$\label{fig:fluohair} and the sample distribution $\mathcal{D}$.}	
%	\end{minipage}
%\end{figure*}
%\else
%\begin{wrapfigure}{r}{.5\textwidth}
%	\includegraphics[width=\linewidth]{figures/figFluoHairv2.pdf}
%	\caption{Illustration of \textit{FluoHair} extrapolation for a VAE face generator (schematized on the left). Transformations $g_k$ modify kernel $k$\label{fig:fluohair} and the sample distribution $\mathcal{D}$.}
%\end{wrapfigure}
%\fi
\michel{roadmap here or above}
We first illustrate what we mean by extrapolation, and its relevance to generalization and  generative models with a straightforward transformation: color change. ``Fluorescent'' hair colors are at least very infrequent in classical face datasets such as CelebA\footnote{\tiny\url{http://mmlab.ie.cuhk.edu.hk/projects/CelebA.html}}, such that classification algorithms trained on these datasets may fail to extract the relevant information from pictures of actual people with such hair, as they are arguably outliers. 

To foster the ability to generalize to such samples, one can consider using generative models to perform data augmentation. However, highly realistic generative models also require training on similar datasets, and are thus very unlikely to generate enough samples with atypical hair attributes. 

Fig.~\ref{fig:fluohair} demonstrates a way to endow a generative model with such extrapolation capabilities: after identifying channels controlling hair properties in the last hidden layer of a trained VAE (based on the approach of \citet{besserve2018counterfactuals}), the convolution kernel $k$ of this last layer can be modified to generate faces with various types of fluorescence (see App.~B.1  %\ref{app:fluohair}
  for details), while the shape of the hair cut, controlled by parameters in the above layers, remains the same, illustrating layer-wise modularity of the network. Notably, this approach to extrapolation is unsupervised: no labeling or preselection of training samples is used. Importantly, in our framework hair color is not controlled by a disentangled latent variable; we rely instead on the structure of VAE/GAN to intervene on color by changing the synaptic weights  corresponding to hidden units influencing hair in the last (downstream) convolution layer thereby influencing output RGB channels (see in App.~B.1). %\ref{app:fluohair}). 
  Such transformation of an element of the computational graph of the generative model will guide our framework. Although this example provides insights on how extrapolations are performed, it exploits some features specific to color encoding of images. To illustrate how our framework helps address more general cases, we will use a different class of interventions that stretch the visual features encoded across a hierarchy of convolutional layers (Model~\ref{model:LTLCNN}, Fig.~\ref{fig:scaleEyeIllust}).

\subsection{Neural Networks as Structural Causal Models}
By selecting the output of a particular hidden layer as intermediate variable $\boldsymbol{V}$, we represent (without loss of generality) a multi-layer generative model as a composition of two functions $f_k^{\boldsymbol{\theta}_k}(.;\,)$, $k=1,2$, parameterized by $\boldsymbol{\theta}_k\in \mathcal{T}_k$, and applied successively to a latent variable $\boldsymbol{Z}$ with a fixed distribution, to generate an output random variable
\begin{equation}\label{eq:genPair}
	\boldsymbol{X} = f_{2}^{\boldsymbol{\theta}_{2}}(\boldsymbol{V}) = f_{2}^{\boldsymbol{\theta}_{2}}(f_{1}^{\boldsymbol{\theta}_1}(\boldsymbol{Z}))\,.
\end{equation}
Assuming the mappings $\boldsymbol{\theta}_k\mapsto f_k$ are one-to-one, we abusively denote parameter values by their corresponding function pair. Besides pathological cases, e.g. ``dead'' neurons resulting from bad training initialization, this assumption appears reasonable in practice.\footnote{the opposite would mean e.g. for a convolutional layer, that two different choices of tensors weights lead to the exact same response for all possible inputs, which appears unlikely} 

%fits the one of observed data $\boldsymbol{Y}$ perfectly. 
\begin{figure}[t]
		\includegraphics[width=\linewidth]{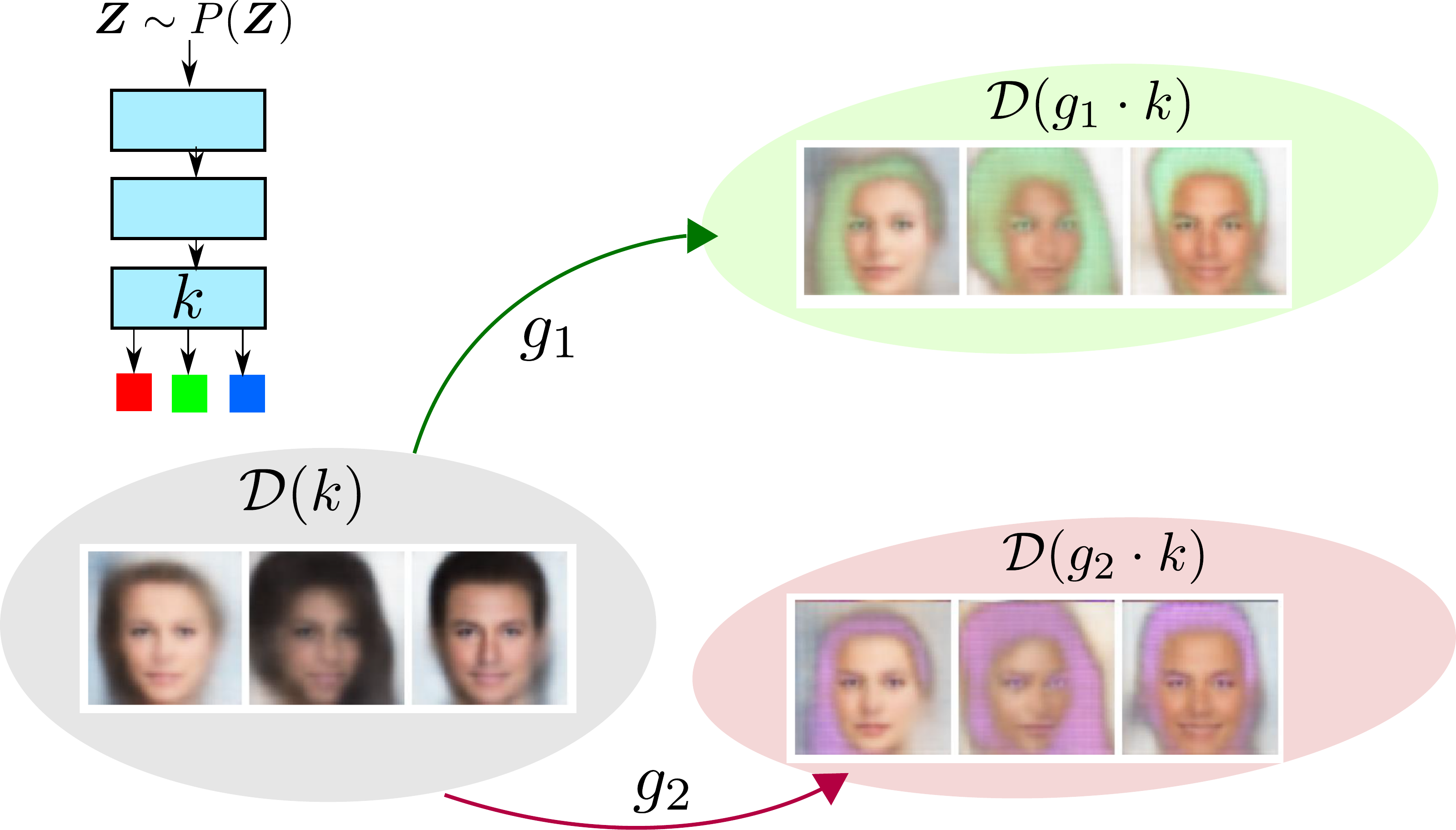}
		\caption{Illustration of \textit{FluoHair} extrapolation for a VAE face generator (left inset). Transformations $g_k$ modify kernel $k$ and the sample distribution $\mathcal{D}$.\label{fig:fluohair}}
\end{figure}
\begin{figure}[t]
		%	\begin{subfigure}{.6\linewidth}
		\includegraphics[width=\linewidth]{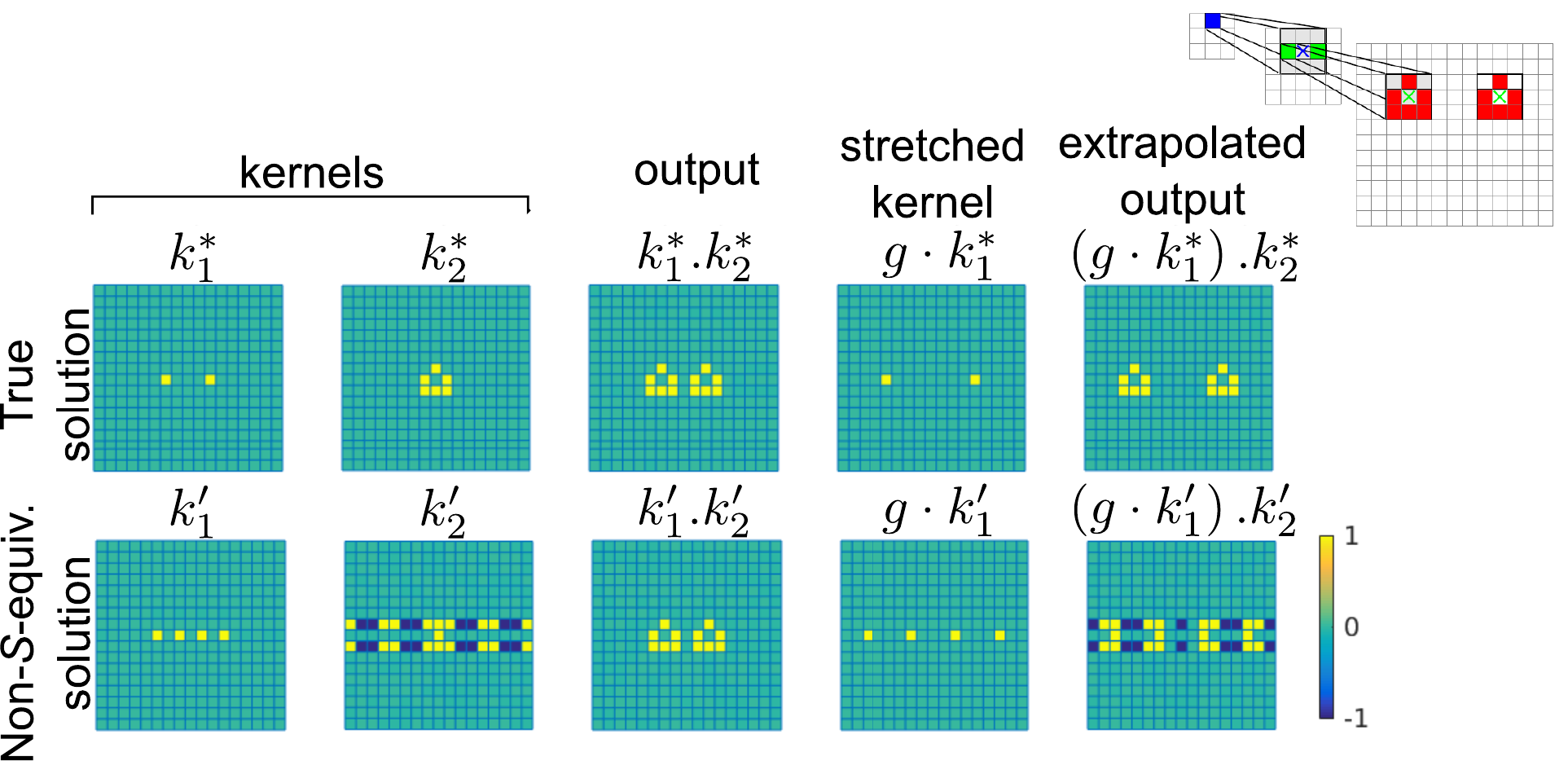}
		%	\subcaption{\label{fig:nonfaitheye}}
		%	\end{subfigure}
		\caption{ Numerical illustration for eye generator of Example~\ref{expl:eyegen}. Top row: true parameters. Bottom: another solution in $S_{\boldsymbol{\theta}^*}$. (Top right inset) Illustration of the two successive convolutions (with additional striding).\label{fig:scaleEyeIllust}}
\end{figure}
%and $\mathcal{M}=\{\mathcal{D}_{\boldsymbol{\theta}}\}_{\boldsymbol{\theta}\in \boldsymbol{\mathcal{T}}}$ the \textit{model class}. 
An assumption central to our work is that the data generating mechanism leading to the random variable $\boldsymbol{Y}$ representing observations corresponds to eq.~(\ref{eq:genPair}) with the so-called \textit{true} parameters $\boldsymbol{\theta}^*\in\mathcal{T}$ corresponding to $(f_1^*,f_2^*)$. More precisely both functions $f_1^*$ and $f_2^*$ are assumed to capture causal mechanisms such that one can interpret eq.~(\ref{eq:genPair}) as a structural causal model \citep{pearl2000causality} with causal graph $\boldsymbol{Z}\rightarrow\boldsymbol{V}\rightarrow\boldsymbol{X}$.

We additionally assume that a learning algorithm fits perfectly the data distribution by choosing the vector of parameters $\widetilde{\boldsymbol{\theta}}$. This assumption allows us to focus on the theoretical underpinnings of extrapolation independent from the widely addressed question of fitting complex generative models to observations.  %This allows us to keep the focus on identifiability issues, while fitting is extensively addressed in the literature. 
In practical settings, this can be approached by choosing an architecture with universal approximation capabilities. Let $\mathcal{D}_{\boldsymbol{\theta}}$ denote the distribution of output $\boldsymbol{X}$ for any parameter pair $\boldsymbol{\theta}$ in $\boldsymbol{\mathcal{T}}= \mathcal{T}_1 \times \mathcal{T}_2$, then we have $\boldsymbol{Y}\sim \mathcal{D}_{\boldsymbol{\theta}^*}= \mathcal{D}_{\widetilde{\boldsymbol{\theta}}}$. 
The fitted parameters will thus belong to a \textit{solution set} $S_{\boldsymbol{\theta}^*}$, defined as a set of function pairs that fit the observational distribution perfectly: 
\begin{equation}\label{eq:solSet}
	S_{\boldsymbol{\theta}^*}=\{(f_1,f_2)| \mathcal{D}_{(f_1,f_2)}=\mathcal{D}_{\boldsymbol{\theta}^*}\}\,.
\end{equation}
If $\widetilde{\boldsymbol{\theta}}=\boldsymbol{\theta}^*$, we can predict the distribution resulting from interventions on these parameters in the real world.
We call such case \textit{structural identifiability}. The IM principle  at the heart of causal reasoning then allows \textit{extrapolation} to other plausible distributions of output $\boldsymbol{Y}$ by intervening on one function while the other is kept fixed (see \textit{FluoHair} example above). In contrast, if $S_{\boldsymbol{\theta}^*}$ is non-singleton and a value $\widetilde{\boldsymbol{\theta}}\neq\boldsymbol{\theta}^*$ is chosen by the learning algorithm, extrapolation is, in general, not guaranteed to behave like the true solution. One source on non-identifiability is the possibility that the pushforward measure of $\boldsymbol{Z}$ by two different functions $f_2\circ f_1= f \neq f'=f_2'\circ f_1'$ belonging to the model class may both match $\mathcal{D}_{\boldsymbol{\theta}^*}$ perfectly. In contrast, we will call \textit{functionally identifiable} a true parameter $\boldsymbol{\theta}^*$ such that the composition $f=f_2\circ f_1$ is uniquely determined by $\mathcal{D}_{\boldsymbol{\theta}^*}$. However, even a functionally identifiable parameter may not be structurally identifiable if $f$ may by obtained by composing different pairs $(f_1, f_2)$ and $(f_1',f_2')$. This last case is the focus of our framework, and will be illustrated using the following model.
\begin{model}[Linear 2-layer convNet]\label{model:LTLCNN}
	%	Assume $K=2$ and functions are linear maps diagonalizable in the same basis. We can represent the generative model of equation~(\ref{eq:genCompos}) (up to linear transformation of the output) with a product of diagonal matrices $(D_k)$ each associated to $f_k$.
	Assume $d,d'$ are two prime numbers\footnote{This will allow defining rigorously a group of transformations for extrapolation.},  $\boldsymbol{Z}$ a $(2d-1)\times (2d'-1)$ random binary latent image, such that one single pixel is set to one at each realization, and probability of this pixel to be located at $(i,j)$ is $\boldsymbol{\pi}_{i,j}$. Let $(k_1,\,k_2)$ be two invertible $(2d-1)\times (2d'-1)$ convolution kernels, and %, and additionally all coefficients of $k_1$ are non-negative. 
	\begin{equation}\label{eq:circonv}
		\boldsymbol{X}=k_2\ostar \boldsymbol{V}=k_2\ostar k_1\ostar \boldsymbol{Z},
	\end{equation}
	where $\ostar$ is the circular convolution (modulo $2d-1,2d'-1$).
\end{model}
The reader can refer to App.~B.3 %~\ref{sec:circconv}
 for a background on circular convolution and how it relates to convolutional layers in deep networks. 
Such model can be used to put several copies of the same object in a particular spatial configuration at a random position in an image. The following example (Fig.~\ref{fig:scaleEyeIllust}) is an ``eye generator'' putting an eye shape at two locations separated horizontally by a fixed distance in an image to model the eyes of a (toy) human face. The location of this ``eye pair'' in the whole image may also be random.
\begin{expl}[Eye generator, Fig.~\ref{fig:scaleEyeIllust}]\label{expl:eyegen}
	%	Assume $K=2$ and functions are linear maps diagonalizable in the same basis. We can represent the generative model of equation~(\ref{eq:genCompos}) (up to linear transformation of the output) with a product of diagonal matrices $(D_k)$ each associated to $f_k$.
	Consider Model~\ref{model:LTLCNN} with  $k_2$ a convolution kernel taking non-zero values within a minimal square of side $\delta<d$ encoding the eye shape, and $k_1$ with only two non-vanishing pixels, encoding the relative position of each eye.
\end{expl}
%\ifarxiv
%\begin{figure*}
%	\centering
%	\begin{subfigure}{.25\linewidth}
%		\vspace*{1.5cm}
%		\includestandalone[width=\linewidth]{figures/expleUncorrelatedFeat}	
%		\vspace*{.7cm}		
%		\subcaption{\label{fig:twoscale}}
%	\end{subfigure}\hfill
%	\begin{subfigure}{.73\linewidth}
%		\includegraphics[width=\linewidth]{figures/nonfaitheyeexpl}
%		\subcaption{\label{fig:nonfaitheye}}
%	\end{subfigure}
%	\caption{Example~\ref{expl:eyegen}: eye generator. (a) Illustration of the two successive convolutions (with additional striding). (b) Numerical example for kernels. Top row: true parameters. Bottom: another solution in $S_{\boldsymbol{\theta}^*}$.}
%\end{figure*}
%\else
%\begin{figure}
%	\centering
%	\begin{subfigure}{.25\linewidth}
%		\vspace*{.5cm}
%		\includestandalone[width=\linewidth]{figures/expleUncorrelatedFeat}	
%		\vspace*{.3cm}		
%		\subcaption{\label{fig:twoscale}}
%	\end{subfigure}\hfill
%	\begin{subfigure}{.6\linewidth}
%		\includegraphics[width=\linewidth]{figures/nonfaitheyeexplv2}
%		\subcaption{\label{fig:nonfaitheye}}
%	\end{subfigure}
%	\caption{Example~\ref{expl:eyegen}: eye generator. (a) Illustration of the two successive convolutions (with additional striding). (b) Numerical example for kernels. Top row: true parameters. Bottom: another solution in $S_{\boldsymbol{\theta}^*}$.\label{fig:scaleEyeIllust}}
%\end{figure}
%\fi

\subsection{Characterization of the Solution Set}\label{sec:solset}
In the context of training such model from data without putting explicit constraints on each kernel, Model~\ref{model:LTLCNN} admits ``trivial'' alternatives to the true parameters $(k_1^*,k_2^*)$ to fit the data perfectly, simply by left-composing arbitrary rescalings and translations with $k_1^*$, and right-composing the inverse transformation to $k_2^{*}$. This is in line with observations by \citet{neyshabur2017geometry} in ReLU networks (incoming synaptic weights of a hidden unit can be downscaled while upscaling all outgoing weights). 
%Indeed, the ReLU activation function $h$ commutes with positive linear rescalings, such that $x\mapsto h(x)=x\mapsto \alpha^{-1} h(\alpha x)$. Incoming synaptic weights of a unit can thus be downscaled while upscaling all outgoing synaptic weights without changing the input-output mapping of the network. 

To go beyond these mere observations, we systematically characterize over-parameterization entailed by composing two functions. 
Let $\boldsymbol{\mathcal{V}}$ %, $\mathcal{X}$, $\mathcal{Z}$  
be the range %s 
of $\boldsymbol{V}$, we define the subset $\Omega$ of right-invertible functions $\omega:\boldsymbol{\mathcal{V}}\rightarrow\boldsymbol{\mathcal{V}}$ such that for any pair $(f_1,\,f_2)$, $(\omega^{-1} \circ f_1, f_2 \circ \omega)$ also corresponds to a valid choice of model parameters.\footnote{We use the convention $A\circ \omega = \{f \circ \omega,\,f\in A\}$.} Trivially, $\Omega$ contains at least the identity map. For any true parameter $\boldsymbol{\theta}^*$, we define the \textit{Composed Over-parameterization Set} (COS)
\begin{equation}\label{eq:COS}
	S^\Omega_{\boldsymbol{\theta}^*}=\left\{\left(\omega^{-1}\circ f_1^{\boldsymbol{\theta}_1^*},\, f_2^{\boldsymbol{\theta}_2^*}\circ \omega\right)|\,\omega\in \Omega\right\}\,.
\end{equation}
The COS reflects how ``internal'' operations in $\Omega$ make the optimization problem under-determined because they can be compensated by internal operations in neighboring layers. By definition, the COS is obviously a subset of the solution set $S_{\boldsymbol{\theta}^*}$. But if we consider \textit{normalizing flow} (NF) models, in which $f_k$'s are always invertible (following \citet{rezende2015variational}), we can show inclusion turns into equality.
\begin{prop}\label{prop:COSequalS}
	For an NF model, $\Omega$ is a group and for any functionally identifiable true parameter $\boldsymbol{\theta}^*$, $S^\Omega_{\boldsymbol{\theta}^*}=S_{\boldsymbol{\theta}^*}$.
\end{prop}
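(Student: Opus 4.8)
The plan is to prove the two assertions separately, exploiting the invertibility that the NF structure guarantees for every admissible $f_1,f_2$ as the single main lever. For the group claim I would first observe that, since every admissible map is invertible, a right-invertible $\omega\in\Omega$ is in fact a bijection of $\boldsymbol{\mathcal{V}}$, so inverses and compositions are well defined. The identity lies in $\Omega$ (already noted). For closure, given $\omega_1,\omega_2\in\Omega$ and any admissible pair $(f_1,f_2)$, I would apply the defining property of $\Omega$ twice in succession: $\omega_1$ sends $(f_1,f_2)$ to the admissible pair $(\omega_1^{-1}\circ f_1,\,f_2\circ\omega_1)$, and $\omega_2$ then sends that to $(\omega_2^{-1}\circ\omega_1^{-1}\circ f_1,\,f_2\circ\omega_1\circ\omega_2)=((\omega_1\circ\omega_2)^{-1}\circ f_1,\,f_2\circ(\omega_1\circ\omega_2))$, whence $\omega_1\circ\omega_2\in\Omega$. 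Closure under inverses follows because, all admissible maps being invertible, composing any bijection of $\boldsymbol{\mathcal{V}}$ on the left of $f_1$ or on the right of $f_2$ preserves invertibility and hence admissibility, so $\omega^{-1}$ also carries admissible pairs to admissible pairs.

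For the set equality, the inclusion $S^\Omega_{\boldsymbol{\theta}^*}\subseteq S_{\boldsymbol{\theta}^*}$ is already granted in the text, so only the reverse inclusion needs work. I would take an arbitrary $(f_1,f_2)\in S_{\boldsymbol{\theta}^*}$, so that $\mathcal{D}_{(f_1,f_2)}=\mathcal{D}_{\boldsymbol{\theta}^*}$, and invoke functional identifiability of $\boldsymbol{\theta}^*$ to obtain the decisive equality of composed maps $f_2\circ f_1=f_2^*\circ f_1^*$. The idea is then to produce a single $\omega$ witnessing $(f_1,f_2)$ as a COS element. Using invertibility of $f_1$ and of $f_2^*$ (NF), I would set $\omega:=f_1^*\circ f_1^{-1}$, a bijection of $\boldsymbol{\mathcal{V}}$.

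The crux is verifying the two simultaneous constraints $f_1=\omega^{-1}\circ f_1^*$ and $f_2=f_2^*\circ\omega$ with this one $\omega$. The first is immediate from the definition. For the second I would right-compose $f_2\circ f_1=f_2^*\circ f_1^*$ by $f_1^{-1}$ to get $f_2=f_2^*\circ f_1^*\circ f_1^{-1}=f_2^*\circ\omega$; equivalently $\omega=(f_2^*)^{-1}\circ f_2$, so the two natural expressions for $\omega$ coincide precisely because the compositions agree. It remains to certify $\omega\in\Omega$: in the NF class every bijection of $\boldsymbol{\mathcal{V}}$ keeps $\omega^{-1}\circ g_1$ and $g_2\circ\omega$ invertible, hence admissible, for any admissible $(g_1,g_2)$, so the constructed $\omega$ qualifies. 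Then $(f_1,f_2)=(\omega^{-1}\circ f_1^*,\,f_2^*\circ\omega)\in S^\Omega_{\boldsymbol{\theta}^*}$, which closes the reverse inclusion.

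The main obstacle I anticipate is not the algebra but the admissibility bookkeeping: one must check that ``valid choice of model parameters'' in the NF setting genuinely coincides with ``a pair of invertible maps of the correct domains,'' so that $\omega\in\Omega$ is established rather than merely assumed, and so that the left/right compositions in the group argument stay inside the model class. The consistency $f_1^*\circ f_1^{-1}=(f_2^*)^{-1}\circ f_2$ is exactly where functional identifiability is indispensable: without equality of the composed maps the candidate $\omega$ would be ill-defined, which is why the hypothesis cannot be dropped.
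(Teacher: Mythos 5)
Your second half (the reverse inclusion) is essentially the paper's own argument: functional identifiability gives $f_2\circ f_1=f_2^*\circ f_1^*$, you set $\omega=f_1^*\circ f_1^{-1}$, and the consistency $\omega=(f_2^*)^{-1}\circ f_2$ is exactly the paper's observation. The genuine gap lies in how you certify membership in $\Omega$, both there and in the group argument. You assert that in the NF class ``every bijection of $\boldsymbol{\mathcal{V}}$ keeps $\omega^{-1}\circ g_1$ and $g_2\circ\omega$ invertible, hence admissible,'' i.e.\ that admissibility reduces to invertibility. That premise is false in general and would make the proposition nearly vacuous: it would force $\Omega$ to be the entire group of bijections of $\boldsymbol{\mathcal{V}}$, whereas in the paper's own instances $\Omega$ is a proper subgroup (invertible circular convolution kernels for Model~1, positive definite diagonal matrices for Model~2). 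The NF hypothesis says every map in the model class $\mathcal{F}_k$ is invertible, not that every invertible map of the right domains lies in $\mathcal{F}_k$. You explicitly flag this as ``the main obstacle'' in your closing paragraph, but the proof as written assumes it away rather than resolving it; closure of $\Omega$ under inverses and the membership $f_1^*\circ f_1^{-1}\in\Omega$ are precisely the two steps that depend on it, and both are left unestablished.

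The paper closes this hole differently: because all model maps are invertible, any $\omega$ satisfying the defining constraint on the first layer can be written as a ``quotient'' of two elements of $\mathcal{F}_1$ (from $\omega^{-1}\circ f_1=f_1'$ one gets $\omega=f_1\circ f_1'^{-1}$), and dually any $\omega$ satisfying the second-layer constraint is a quotient of two elements of $\mathcal{F}_2$. The group structure of $\Omega_1$, $\Omega_2$, and hence of $\Omega=\Omega_1\cap\Omega_2$, is derived from this representation, and the constructed $\omega=f_1^*\circ f_1^{-1}=(f_2^{-1}\circ f_2^*)^{-1}$ is certified to lie in $\Omega$ precisely because it admits such a quotient representation on both sides, with all four factors taken from the model classes. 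To repair your argument you would need to replace ``any bijection is admissible'' by this quotient characterization (or an equivalent device tying $\omega$ back to $\mathcal{F}_1$ and $\mathcal{F}_2$), since that is the only place the NF hypothesis genuinely does the work. Your identity and composition-closure steps, by contrast, are fine and follow directly from the definition of $\Omega$.
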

Notably, this result directly applies to Model~\ref{model:LTLCNN} (see Corollary~1 %\ref{prop:convCOS}
 in App.~B.6). %\ref{sec:addresultsSec2}).
%\begin{corol}\label{prop:convCOS}
%	Model~\ref{model:ABmodel} is an NF model for which $\Omega=\{k'*k^{-1},\,(k,k')\in K_d^+\}$, with $K_d^+$ the set of invertible $d\times d$ convolution kernels with non-negative coefficients, and therefore $S^\Omega_{k_1^*,k_2^*}=S_{k_1^*,k_2^*}$.
%\end{corol}
%A consequence
%of that is that if we seek to intervene on one of the functions to apply a
%smooth transformation on the generated distribution, it may lead to an erroneous
%output because the coupling between functions prevents us from applying those smooth changes to the output without changingboth functions at the same time.
%In the next section, we show  strongly influenced by the optimization. This idea is in line with recent theoretical work investigating the regularizing behavior of gradient descent algorithms in specific over-parameterized cases \citep{li2018algorithmic}.
We will exploit the COS group structure to study the link between identifiability and extrapolation, which we define next.

\subsection{Extrapolated Class of Distributions}\label{sec:metaclass}
\michel{improve second part of section}
Humans can generalize from observed data by envisioning objects that were not previously observed, akin to our \textit{FluoHair} example (Fig.~\ref{fig:fluohair}). To mathematically define the notion of extrapolation, we pick interventions from a group $\mathcal{G}$ (i.e. a set of composable invertible transformations, see App.~B.2 %~\ref{sec:groupth} 
for background) to manipulate the abstract/internal representation instantiated by vector $\boldsymbol{V}=f_1(\boldsymbol{Z})$. Given parameter pair $\boldsymbol{\theta}^*=(f_1^*,f_2^*)$,
we then define the $\mathcal{G}$-\textit{extrapolated model class}, which contains the distributions generated by the interventions on $\boldsymbol{V}$ through the action of the group on $f_1$: 
%$\mathcal{M}_{\boldsymbol{\theta}^*}$ reflects the idea that the observed distribution belongs to a larger set of distributions corresponding to \textit{conceivable} variations of the data generating process. 
\begin{equation}\label{eq:metaclass}
	\mathcal{M}^\mathcal{G}_{(f_1^*,f_2^*)}=\mathcal{M}^\mathcal{G}_{\boldsymbol{\theta}^*}\triangleq \left\{\mathcal{D}_{(g\cdot f_1^*,\, f_2^*)},\,g\in \mathcal{G} \right\}
	%= \left\{\mathcal{D}_{( f_1^*,\, f_2^*\circ g)},\,g\in \mathcal{G} \right\}
	\,,
\end{equation}
where $g\cdot f_1^*$ denotes the group action of $g$ on $f_1^*$, transforming it into another function. $\mathcal{G}$ thus encodes the inductive bias used to extrapolate from one learned model to others (when $\mathcal{G}$ is unambiguous $\mathcal{M}^\mathcal{G}_{\boldsymbol{\theta}^*}$ is denoted $\mathcal{M}_{\boldsymbol{\theta}^*}$).
%\footnote{For simplicity $\mathcal{G}$ acts on $f_1$, however, acting instead on $f_2$, or both, can be handled in a similar way.} 
\michel{note on different actions removed} An illustration of the principle of extrapolated class, is provided in Suppl.~ Fig.~2.%\ref{fig:manifold}.

Choosing the set of considered interventions to have a group structure allows to have an unequivocal definition of a uniform (Haar) measure on this set for computing expectations and to derive interesting theoretical results. Note this does not cover non-invertible hard interventions that set a variable to a fixed value $y=y_0$, while  shifting the current value by a constant $y\rightarrow y+g$ does fit in the group framework. In the context of neural networks, this framework also allows to model a family of interventions on a hidden layer which can be restricted to only part of this layer, as done in the \textit{FluoHair} example (see App.~B.1). %\ref{app:fluohair}).

The choice of the group is a form of application-dependent inductive bias. For Model~\ref{model:LTLCNN}, a meaningful choice is the multiplicative group $\mathcal{S}$ of integers modulo $d$ (with $d$ prime number, see App.~B.5), %\ref{app:stretchGrp}),  
such that the group action of stretching the horizontal image axis $\{-d+1,..,0,d-1\}$ by factor $g\in \mathcal{S}$ turns convolution kernel $k$ into $(g\cdot k)(m,n)=k(gm,gn)$. Such stretching is meant to approximate the rescaling of a continuous axis, while preserving group properties,
%\begin{prop}[Scaling group]\label{prop:scalgroup}
%	The set $\mathcal{S}$ of multiplications by non-zero integers modulo $d$ is a cyclic group, and for $g\in \mathcal{S}$ the transformation that turns kernel $k$ into $(g\cdot k)(m,n)=k(gm,gn)$, $m,n = -d,..,0,..,d$ is a group action.
%\end{prop}
%\michel{make informal}
%\michel{use symmetric coordinates,  we use coordinates -d:d with d prime, and multiply by numbers 1:d,possibly (needed?) use also a change of sign such that we have 2*d transformations}
and models classical feature variations in naturalistic images (see  App.~B.5). %\ref{app:stretchGrp}).
 As an illustration for Example~\ref{expl:eyegen}, using this group action leads to an extrapolated class that comprises models with various distances between the eyes, corresponding to a likely variation of human face properties. See Fig.~\ref{fig:scaleEyeIllust}, top row for an example extrapolation using this group.  Interestingly, such spatial rescalings also correspond to frequency axis rescalings in the Fourier domain (see background in App.~B.4). %\ref{sec:dft}).
  Indeed, let $\widehat{k}$ be the Discrete Fourier Transform (DFT) of kernel $k$, $(g\cdot k)(m,n)=k(gm,gn)$ corresponds to $(g\cdot \widehat{k})(u,v)=\widehat{k}(ug^{-1},ng^{-1})$ such that the frequency axis is rescaled by the inverse of $g$. Due to the relationship between convolution and Fourier transform (App.~B.4), %\ref{sec:dft}), 
  several results for Model~\ref{model:LTLCNN} will be expressed in the Fourier domain where convolution acts as a diagonal matrix multiplication.

\subsection{Extrapolation Replaces Identification: $\mathcal{G}$-equivalence and $\mathcal{G}$-genericity}
\michel{Merge and simplify this section and the next}
As elaborated above, 
$
S_{\boldsymbol{\theta}^*}$
may not be singleton such that a solution $(\tilde{f}_1,\,\tilde{f}_2)$ found by the learning algorithm may not be the true pair $(f_1^*,\,f_2^*)$, leading to a possibly different extrapolated class when intervening on $f_1$ with elements from group $\mathcal{G}$. When extrapolated classes happen to be the same, we say the solution is $\mathcal{G}$-\textit{equivalent} to the true one.
\begin{defn}[$\mathcal{G}$-equivalence]
	The solution $(\tilde{f}_1,\,\tilde{f}_2)$ is $\mathcal{G}$-\textit{equivalent} to the true $(f_1^*,\,f_2^*)$ if it generates the same extrapolated class through the action of $\mathcal{G}$: ${\mathcal{M}^{\mathcal{G}}_{(\tilde{f}_1,\,\tilde{f}_2)}}=\mathcal{M}_{(f_1^*,f_2^*)}^{\mathcal{G}}$.	
\end{defn}
%However, it shares the idea that a generative model faithfully encodes the same ``independence structure'', which in our case is to be understood in the sense of ICM (see Section~\ref{sec:ICM}).
An illustration of $\mathcal{G}$-equivalence violation for Example~\ref{expl:eyegen} is shown in Fig.~\ref{fig:scaleEyeIllust} (bottom row), and an additional representation of the phenomenon is given in Suppl.~Fig.~2. %\ref{fig:manifold}.\michelb{interpret}
Such equivalence of extrapolations imposes additional requirements on solutions. In the NF cases, such constraints rely on the interplay between the group structure of $\Omega$ (group of the COS in eq.~(\ref{eq:COS})) that constrains over-parameterization, and the group structure of $\mathcal{G}$. For Model~\ref{model:LTLCNN}, in the 1D case this leads to
\michel{removed detailed explanation, add again if possible} %Intuitively, we can see how this restriction happens by looking at NF models: by Prop.~\ref{prop:COSequalS}, the solution $(\tilde{f}_1,\,\tilde{f}_2)$ relates to the true solution $(f_1^*,\,f_2^*)$ by $(\omega^{-1}\tilde{f}_1 ,\,\tilde{f}_2 \omega)=(f_1^*,\,f_2^*)$, for one $\omega \in \Omega$, but with the additional contraint of $\mathcal{G}$-equivalence, this entails that for all $g\in\mathcal{G}$, there is a $g'\in\mathcal{G}$, such that $ (f_2^*\omega^{-1})\circ (g'\cdot \omega f_1^*)= f_2^*\circ (g\cdot f_1^*)$. Such that the structure of $\mathcal{G}$ and $\Omega$ contrain the possible values for $\omega$.
\begin{prop}\label{prop:convFaith}
	Assume $\widehat{\boldsymbol{\pi}}$ has no zero element and d'=1, the solution $({k}_1,\,{k}_2)$ for Model~\ref{model:LTLCNN} is $\mathcal{S}$-equivalent to true model $(k_1^*,\,k_2^*)$ if and only if there exists one $\lambda\in \mathbb{C}$ such that $(\widehat{k}_1(u)],\,\widehat{k}_2(u)])=(\lambda^{-1}\widehat{k}_1^*(u),\, \lambda\widehat{k}_2^*(u))$ for all $u>0$.
\end{prop}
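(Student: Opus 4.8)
The plan is to transport everything to the Fourier domain, where circular convolution becomes pointwise multiplication, and to reduce $\mathcal{S}$-equivalence to a statement about the single ratio of transforms $r(u):=\widehat{k}_1(u)/\widehat{k}_1^*(u)$. First I would record the solution-set constraint: by Proposition~\ref{prop:COSequalS} specialized to Model~\ref{model:LTLCNN} (an NF model), every solution shares the same composite, so $(k_1,k_2)\in S_{\boldsymbol{\theta}^*}$ forces $\widehat{k}_1(u)\widehat{k}_2(u)=\widehat{k}_1^*(u)\widehat{k}_2^*(u)=:c(u)$ for all $u$, with $c$ nowhere zero since the kernels are invertible. Thus a solution is parametrized by an arbitrary nowhere-zero $r$, and the target condition ``$\exists\lambda$ with $\widehat{k}_1=\lambda^{-1}\widehat{k}_1^*$ and $\widehat{k}_2=\lambda\widehat{k}_2^*$ on $u>0$'' is exactly ``$r$ is constant on the positive frequencies'' (the negative frequencies follow from conjugate symmetry of real kernels, and $u=0$ is a fixed point of the $\mathcal{S}$-action, hence never constrained by extrapolation).

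Next I would translate $\mathcal{S}$-equivalence. Since $\widehat{\boldsymbol{\pi}}$ has no zero, the composite filter is recoverable from the output law, so two parameter pairs give the same distribution iff they have the same composite; hence $\mathcal{D}_{(g\cdot k_1,k_2)}$ is determined by $H_g(u):=\widehat{k}_1(ug^{-1})\widehat{k}_2(u)$, using $(g\cdot\widehat{k}_1)(u)=\widehat{k}_1(ug^{-1})$. Dividing by the common composite $c$, $\mathcal{S}$-equivalence becomes the equality of two families of functions on $\{u>0\}$, namely $\{Q_g\}_{g\in\mathcal{S}}=\{Q^*_{g'}\}_{g'\in\mathcal{S}}$ with $Q_g(u)=\widehat{k}_1(ug^{-1})/\widehat{k}_1(u)$ and $Q^*_{g'}$ the analogue for $k_1^*$. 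Two structural facts then drive the argument: $d$ prime makes $\mathcal{S}=(\mathbb{Z}/d\mathbb{Z})^{\times}$ cyclic of order $d-1$ and act regularly (simply transitively) on the positive frequencies, and $\{Q_g\}$ is the multiplicative coboundary of $\widehat{k}_1$, obeying the cocycle relation $Q_{hg}(u)=Q_h(ug^{-1})\,Q_g(u)$.

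The \textbf{if} direction is then immediate: if $r\equiv\lambda^{-1}$ on $u>0$ then $Q_g(u)=\widehat{k}_1^*(ug^{-1})/\widehat{k}_1^*(u)=Q^*_g(u)$ for every $g$, so $H_g=H^*_g$ and the two extrapolated classes coincide term by term.

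For the \textbf{only if} direction, set equality yields a bijection $\sigma$ of $\mathcal{S}$ with $Q_g=Q^*_{\sigma(g)}$. Evaluating at a fixed frequency already gives information: the multiset $\{Q_g(u)\}_g$ equals the multiset of all values of $\widehat{k}_1$ scaled by $1/\widehat{k}_1(u)$, and matching these for every $u$ forces $r$ to be constant up to a finite group of root-of-unity scalings. To upgrade this to exact constancy I would pass to the cyclic discrete Fourier transform of $\log\widehat{k}_1$ and $\log\widehat{k}_1^*$ and compare their spectra mode by mode through the coboundary relation. The main obstacle is precisely the unknown relabeling $\sigma$: because it may be a nontrivial automorphism of the cyclic group $\mathcal{S}$, it mixes Fourier modes and, for degenerate true kernels, could in principle produce spurious equivalences with a non-constant $r$. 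The heart of the proof is therefore to show that under the standing non-degeneracy ($\widehat{\boldsymbol{\pi}}$ nowhere zero, invertible kernels, and genericity of $\widehat{k}_1^*$) the only admissible $\sigma$ is the identity; once $\sigma=\mathrm{id}$, the relation $Q_g=Q^*_g$ for all $g$ together with the transitivity of $\mathcal{S}$ on the positive frequencies forces $r(ug^{-1})=r(u)$ for all $u,g$, i.e.\ $r$ constant, which is the claimed characterization.
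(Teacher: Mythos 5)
Your setup (Fourier domain, the composite constraint $\widehat{k}_1\widehat{k}_2=\widehat{k}_1^*\widehat{k}_2^*$, reduction to the ratio $r$, and the cocycle reformulation of $\mathcal{S}$-equivalence up to an unknown relabeling $\sigma$ of $\mathcal{S}$) is sound, and your ``if'' direction is fine. But the ``only if'' direction has a genuine gap: you correctly identify that the whole difficulty is the unknown bijection $\sigma$, and then you do not resolve it --- you only announce that ``the heart of the proof is to show that the only admissible $\sigma$ is the identity,'' invoking a ``genericity of $\widehat{k}_1^*$'' that is not among the hypotheses of the proposition (which assumes only that $\widehat{\boldsymbol{\pi}}$ is nowhere zero, $d'=1$, and the kernels are invertible). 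As written, your argument would at best prove a weaker statement with extra non-degeneracy assumptions on the true kernel, and the log-DFT/coboundary spectral comparison you sketch is not carried out.

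The paper's proof shows that pinning down $\sigma$ is unnecessary. From $\mathcal{S}$-equivalence one gets, for some permutation $s$ of $\mathcal{S}$, the identity $(g\cdot\widehat{k}_1)\odot\widehat{k}_2=(s(g)\cdot\widehat{k}_1^*)\odot\widehat{k}_2^*$ for all $g$. Averaging this over all $g\in\mathcal{S}$ kills the dependence on $s$ entirely, because $\sum_g s(g)\cdot\widehat{k}_1^*=\sum_{g'} g'\cdot\widehat{k}_1^*$; and since $\mathcal{S}$ acts transitively on the strictly positive frequencies (the fact you already noted), the average of $g\cdot\widehat{k}_1$ replaces every positive-frequency coefficient by the constant $\langle\widehat{k}_1\rangle$. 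This yields $\langle\widehat{k}_1\rangle\,\widehat{k}_2[\nu]=\langle\widehat{k}_1^*\rangle\,\widehat{k}_2^*[\nu]$ for all $\nu>0$, hence $\widehat{k}_2=\lambda\widehat{k}_2^*$ with $\lambda=\langle\widehat{k}_1^*\rangle/\langle\widehat{k}_1\rangle$, and the composite equality then gives $\widehat{k}_1=\lambda^{-1}\widehat{k}_1^*$. I recommend you replace your $\sigma=\mathrm{id}$ program with this averaging step; it is both shorter and avoids assumptions you are not entitled to.
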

This shows that at least in this model, $\mathcal{G}$-equivalence is achieved only for solutions that are very similar to the true parameters $\mathcal{\theta}^*$ (up to a multiplicative factor), thus only slightly weaker than identifiability.
As $\mathcal{G}$-equivalence requires knowledge of the true solution, in practice we resort to characterizing invariant properties of $\mathcal{M}_{\boldsymbol{\theta}^*}$ to select solutions. Indeed, if $\mathcal{M}_{\boldsymbol{\theta}^*}$ is a set that ``generalizes'' the true model distribution $\mathcal{D}_{\boldsymbol{\theta}^*}$, it should be possible to express the fact that some property of $\mathcal{D}_{\boldsymbol{\theta}^*}$ is \textit{generic} in $\mathcal{M}_{\boldsymbol{\theta}^*}$. Let ${\varphi}$ be a \textit{contrast} function capturing approximately the relevant property of $\mathcal{D}_{\boldsymbol{\theta}^*}$, we check that such function does not change on average when applying random transformations from $\mathcal{G}$, by sampling from the Haar measure of the group $\mu_\mathcal{G}$,\footnote{$\mu_\mathcal{G}$ is a ``uniform'' distribution on $\mathcal{G}$, see App.~B.2} %\ref{sec:groupth}}, 
leading to  \michelb{improve}
\begin{defn}[Contrast based $\mathcal{G}$-genericity]\label{def:gDisent}
	Let ${\varphi}$ be a function mapping distributions on the generator output space to $\mathbb{R}$, and $\mathcal{G}$ a compact group. For any solution $(\tilde{f}_1,\,\tilde{f}_2)$ of the model fit procedure, we define the \textit{generic ratio}
	\begin{equation}\label{eq:genRatio}
		\rho(\tilde{f}_1,\,\tilde{f}_2)=\rho(\tilde{f}_1(\boldsymbol{Z}),\,\tilde{f}_2)\triangleq\frac{{\varphi}(\mathcal{D}_{(\tilde{f}_1,\,\tilde{f}_2)})}{\mathbb{E}_{g\sim\mu_\mathcal{G}}{\varphi}(\mathcal{D}_{(g\cdot\tilde{f}_1,\,\tilde{f}_2)})}\,.
	\end{equation}
	Solution $(\tilde{f}_1,\,\tilde{f}_2)$ is $\mathcal{G}$-generic w.r.t. ${\varphi}$, whenever it satisfies $\rho(\tilde{f}_1,\,\tilde{f}_2) = 1$.
\end{defn}
It then follows trivially from the definition that $\mathcal{G}$-equivalence entails a form of $\mathcal{G}$-genericity.
\begin{prop}\label{disentFaith}
	For ${\varphi}$ constant on $\mathcal{M}_{\boldsymbol{\theta}^*}^{\mathcal{G}}$, $\mathcal{G}$-equivalent to the true solution implies $\mathcal{G}$-generic w.r.t. ${\varphi}$.% and $(f_1^*,f_2^*)$ is $\mathcal{G}$-generic.
\end{prop}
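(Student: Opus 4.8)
The plan is to unwind the three definitions involved---$\mathcal{G}$-equivalence, the extrapolated class $\mathcal{M}^{\mathcal{G}}_{\boldsymbol{\theta}^*}$, and the generic ratio $\rho$---and to observe that the assumed constancy of $\varphi$ forces both the numerator and the denominator of $\rho$ to take the same value, so that their quotient is $1$. I would begin by naming the constant: let $c$ be the common value of $\varphi$ on $\mathcal{M}^{\mathcal{G}}_{\boldsymbol{\theta}^*}$, i.e. $\varphi(\mathcal{D}) = c$ for every $\mathcal{D} \in \mathcal{M}^{\mathcal{G}}_{\boldsymbol{\theta}^*}$. By the $\mathcal{G}$-equivalence hypothesis, $\mathcal{M}^{\mathcal{G}}_{(\tilde{f}_1, \tilde{f}_2)} = \mathcal{M}^{\mathcal{G}}_{\boldsymbol{\theta}^*}$, so $\varphi$ equals $c$ on the extrapolated class of the fitted solution as well.

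I would then evaluate the two pieces of $\rho$ separately. For the numerator, applying the identity element of $\mathcal{G}$ in the definition (\ref{eq:metaclass}) shows $\mathcal{D}_{(\tilde{f}_1, \tilde{f}_2)} \in \mathcal{M}^{\mathcal{G}}_{(\tilde{f}_1, \tilde{f}_2)}$, whence $\varphi(\mathcal{D}_{(\tilde{f}_1, \tilde{f}_2)}) = c$. For the denominator, every $g \in \mathcal{G}$ gives $\mathcal{D}_{(g \cdot \tilde{f}_1, \tilde{f}_2)} \in \mathcal{M}^{\mathcal{G}}_{(\tilde{f}_1, \tilde{f}_2)}$ directly by definition of the extrapolated class, so $\varphi(\mathcal{D}_{(g \cdot \tilde{f}_1, \tilde{f}_2)}) = c$ for every $g$; integrating the constant against the Haar measure $\mu_{\mathcal{G}}$ yields $\mathbb{E}_{g \sim \mu_{\mathcal{G}}} \varphi(\mathcal{D}_{(g \cdot \tilde{f}_1, \tilde{f}_2)}) = c$. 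Substituting into (\ref{eq:genRatio}) gives $\rho(\tilde{f}_1, \tilde{f}_2) = c/c = 1$, which is exactly $\mathcal{G}$-genericity w.r.t. $\varphi$.

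The one point requiring care---and the only genuine obstacle in an otherwise purely substitutional argument---is the degenerate case $c = 0$, in which $\rho$ becomes an indeterminate $0/0$. I would handle this by noting that $\rho$ is only defined when its denominator is nonzero, so that the statement is implicitly conditioned on $c \neq 0$; alternatively, one can phrase the conclusion as the equality of numerator and denominator, which holds unconditionally and captures the intended meaning of genericity. With that caveat the claim follows immediately, consistent with the remark that it holds ``trivially from the definition.''
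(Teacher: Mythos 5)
Your proof is correct and is exactly the argument the paper has in mind: the paper offers no separate proof, stating only that the claim ``follows trivially from the definition,'' and your unwinding of $\mathcal{G}$-equivalence and the generic ratio (numerator and denominator both equal to the constant value of $\varphi$, hence $\rho=1$) is that trivial argument made explicit. Your additional caveat about the degenerate case $c=0$ is a reasonable point the paper does not address, but it does not change the substance.
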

Genericity was originally defined by \citet{besserve2018aistats} as a measure of \textit{independence} between cause $\boldsymbol{V}=f_1(\boldsymbol{Z})$ and mechanism $f_2$. In practice, genericity is not expected to hold rigorously but approximately (i.e. $\rho$ should be close to one). In the remainder of the paper, we use interchangeably the ``functional'' notation $\rho(\tilde{f}_1,\,\tilde{f}_2)$ and the original ``cause-mechanism'' notation $\rho(\boldsymbol{V},\,\tilde{f}_2)$.%Broadly construed, it reflects the idea that the mechanism does not adapt its properties to the specific input it receives, such that the global properties of the effect are generic (i.e. they are similar to what would have been produced with another cause of the same kind). 

%In the context of this paper, if we assume a causal interpretation $\boldsymbol{V}\rightarrow \boldsymbol{X}$
%for the generative model given by Equation~\ref{eq:genPair} for the causal relationship,
%the ICM postulate between cause $\boldsymbol{V}$ and mechanism $f_2$ 
\michel{possibly remove}
%In our context, independence reflects
%the assumption that the properties of $\boldsymbol{V}$ are modulated by factors (pertaining to $f_1$) that have nothing to do with
%the ones affecting the mechanism $f_2$ that computes $\boldsymbol{X}$ from $\boldsymbol{V}$. 
%In the above definition of $\varphi$, these modulating factors are modeled by applying elements $g\in\mathcal{G}$ selected at random, that turn $\boldsymbol{V}= f_1(\boldsymbol{Z})$ into a perturbed version $\tilde{\boldsymbol{V}}=(g\cdot f_1)(\boldsymbol{Z})$.

\subsection{Link Between Genericity and Direction of Causation}\label{sec:anticausal}
An interesting application of genericity is identifying the direction of causation : in several settings, if $\varphi(\tilde{f}_1(\boldsymbol{Z}),\,\tilde{f}_2)=1$ for the causal direction $\boldsymbol{V}\rightarrow \boldsymbol{X}$, reflecting a genericity assumption in a causal relation, then the \textit{anti-causal} direction $\boldsymbol{X}\rightarrow \boldsymbol{V}$ is not generic as  $\varphi(\boldsymbol{X},\,\tilde{f}_2^{-1})\neq 1$. %Given causal and anti-causal learning may require different approaches \citep{anticausal}, 
As genericity, as measured empirically by its ratio, is only approximate (ratio not exactly equal to one), comparing genericity of the directions $\boldsymbol{Z}\rightarrow \boldsymbol{X}$ and $\boldsymbol{X}\rightarrow \boldsymbol{Z}$ can be used to support the validity of the genericity assumption. This comparison is supported by several works on identifiation of causal pairs using IM, showing identifiability can be obtained on toy examples by choosing the direction of causation that maximized genericity \citep{shajarisale2015,frW_UAI,icml2010_062,janzing2012information}. We use spectral independence to check genericity of neural network architectures in Sec.~\ref{sec:expDeep}.

\subsection{Scale and Spectral Independence}\label{sec:scaleIM}
In the case of Example~\ref{expl:eyegen} and for stretching transformations, restricted to the 1D case (d'=1), one reasonable contrast is the total \textit{Power} across non-constant frequencies, which can be written (see App.~B.4) %\ref{sec:dft})
\begin{equation}
	\label{eq:pow}
	\mathcal{P}(\textbf{X})=\frac{1}{d-1}\sum_{i\neq 0} |\widehat{k}_2(i)\widehat{k}_1(i)|^2=\left\langle |\widehat{k}_2\odot\widehat{k}_1|^2\right\rangle\,,
\end{equation}
where $\langle .\rangle$ denotes averaging over non zero frequencies and $\odot$ is the entrywise product. 
Indeed, this quantity is preserved when we stretch the distance between both eyes, as long as they do not overlap. The following result allows to exploit genericity to find a good solution:
\begin{prop}[Informal, see App.~A]%\ref{sec:proofs}]
	\label{prop:toySIC}
	For Model~\ref{model:LTLCNN} in the 1D case, the $\mathcal{S}$-generic ratio with respect to $\varphi=\mathcal{P}$ is
	\begin{equation}
		\label{eq:sdr}
		\rho{(\boldsymbol{V}, {k_2})}=\frac{\langle\mathbb{E} |\widehat{k}_2\odot \widehat{\boldsymbol{V}}|^2\rangle  }{\langle \mathbb{E}|  \widehat{\boldsymbol{V}}|^2\rangle \langle |\widehat{k}_2|^2 \rangle }=\frac{\langle |\widehat{k}_2\odot \widehat{k}_1|^2\rangle  }{\langle |\widehat{k}_1|^2\rangle \langle |\widehat{k}_2|^2 \rangle }\,,
	\end{equation}
	Moreover, the true solution of Example~\ref{expl:eyegen} is $\mathcal{S}$-generic. 
\end{prop}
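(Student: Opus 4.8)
The plan is to carry out the whole argument in the Fourier domain, where by the convolution theorem each circular convolution becomes an entrywise product $\odot$ of discrete Fourier transforms, and to split the claim into two parts: (i) deriving the closed form for $\rho(\boldsymbol{V},k_2)$, and (ii) verifying that the true pair $(k_1^*,k_2^*)$ of Example~\ref{expl:eyegen} makes that ratio equal to one.

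For (i), I would first compute the numerator $\varphi(\mathcal{D}_{(k_1,k_2)})=\mathcal{P}(\boldsymbol{X})$. Since $\boldsymbol{Z}$ places a single unit pixel at a random location, its DFT is a pure phase, so $|\widehat{\boldsymbol{Z}}(u)|^2=1$ and $\mathbb{E}|\widehat{\boldsymbol{Z}}(u)|^2=1$ at every frequency. With $\widehat{\boldsymbol{V}}=\widehat{k}_1\odot\widehat{\boldsymbol{Z}}$ and $\widehat{\boldsymbol{X}}=\widehat{k}_2\odot\widehat{\boldsymbol{V}}$ this yields $\mathbb{E}|\widehat{\boldsymbol{V}}(u)|^2=|\widehat{k}_1(u)|^2$ and $\mathbb{E}|\widehat{\boldsymbol{X}}(u)|^2=|\widehat{k}_2(u)|^2|\widehat{k}_1(u)|^2$, so the numerator is $\langle|\widehat{k}_2\odot\widehat{k}_1|^2\rangle$ and the two forms in eq.~(\ref{eq:pow}) coincide. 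For the denominator I would use that the Haar measure $\mu_\mathcal{S}$ on the finite group $\mathcal{S}$ is uniform and that the action sends $\widehat{k}_1(u)\mapsto\widehat{k}_1(ug^{-1})$; averaging $\langle|\widehat{k}_2(u)|^2|\widehat{k}_1(ug^{-1})|^2\rangle_u$ over $g$, then swapping the two finite sums, reduces to evaluating $\tfrac{1}{|\mathcal{S}|}\sum_{g}|\widehat{k}_1(ug^{-1})|^2$ for each fixed non-constant $u$. Because multiplication by $u$ is a bijection of $\mathcal{S}$ (as $d$ is prime), this inner average equals $\langle|\widehat{k}_1|^2\rangle$ independently of $u$, so the denominator factorizes as $\langle|\widehat{k}_1|^2\rangle\langle|\widehat{k}_2|^2\rangle$ and eq.~(\ref{eq:sdr}) follows.

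For (ii), the task reduces to showing the true power spectra are uncorrelated across frequencies, i.e. $\langle|\widehat{k}_1^*|^2|\widehat{k}_2^*|^2\rangle=\langle|\widehat{k}_1^*|^2\rangle\langle|\widehat{k}_2^*|^2\rangle$. I would move to the autocorrelation domain via Parseval: $\sum_u|\widehat{k}_1^*(u)|^2|\widehat{k}_2^*(u)|^2\propto\sum_n r_1^*(n)\,r_2^*(n)$, where $r_j^*$ is the circular autocorrelation of $k_j^*$. The two-pixel kernel $k_1^*$, with its nonzero pixels at separation $a$, has $r_1^*=2\delta_0+\delta_a+\delta_{-a}$, while the eye shape $k_2^*$, supported on a window of width $\delta$, has $r_2^*$ supported on lags $|n|<\delta$. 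The non-overlap condition $a\ge\delta$ then forces $r_2^*(\pm a)=0$, so only the zero-lag term survives and the inner product collapses to $2\,r_2^*(0)$, which is exactly the product of the individual averages. This vanishing of the cross-lag autocorrelation is the structural reason the true solution is generic.

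The main obstacle is bookkeeping around the constant (DC) frequency: the Parseval factorization above is exact only when the average runs over all frequencies, whereas the contrast $\mathcal{P}$ deliberately excludes the constant one. Removing $u=0$ leaves a boundary correction of order $1/d$ (proportional to $\langle|\widehat{k}_2^*|^2\rangle-|\widehat{k}_2^*(0)|^2$), so I expect the clean statement $\rho=1$ either in the convention where decorrelation is read off at all non-constant frequencies as above, or asymptotically as $d\to\infty$; this is consistent with genericity being an approximate ($\rho\approx1$) rather than exact property. Secondary points to check are the conjugate-symmetry normalization (the $1/(d-1)$ factor versus the $2d-2$ non-constant frequencies) and that invertibility of $k_1^*,k_2^*$ keeps the denominator of eq.~(\ref{eq:sdr}) nonzero, so that the ratio is well defined.
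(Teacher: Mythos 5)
Your proof follows essentially the same route as the paper's: part (i) is the identical group-averaging argument in the Fourier domain (uniform Haar average over $\mathcal{S}$ permutes the nonzero frequencies, so the denominator factorizes), and your autocorrelation inner product $\sum_n r_1^*(n)\,r_2^*(n)$ for part (ii) is exactly the paper's evaluation of $(k_2\ostar k_{2,\sigma})\ostar(v\ostar v_\sigma)$ at the origin, with the same support-disjointness argument. Your remark about the excluded constant frequency flags a bookkeeping issue the paper's sketch silently glosses over (it effectively averages over all frequencies), which is consistent with the proposition being labeled informal.
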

We call $\rho$ the Spectral Density Ratio (SDR), as it appears as a discrete frequency version of the quantity
introduced by \citet{shajarisale2015} (baring the excluded zero frequency).
We say such $\mathcal{S}$-generic solution w.r.t. $\rho$ satisfies \textit{scale or spectral independence}.
This supports the use of SDR to check whether successive convolution layers implement mechanisms at independent scales.\michelb{transition}

\section{How Learning Algorithms Affect Extrapolation Capabilities}\label{sec:opttot}
\subsection{Simplified Diagonal Model}\label{sec:diaggene}
\michel{we first show on a linear model...}
When models are over-parameterized, the learning algorithm likely affects the choice of parameters, and thus the extrapolation properties introduced above. We will rely on a simplification of Model~\ref{model:LTLCNN}, that allows to study the mechanisms at play without the heavier formalism of convolution operations.
\begin{model}\label{model:ABmodel}
	%	Assume $K=2$ and functions are linear maps diagonalizable in the same basis. We can represent the generative model of equation~(\ref{eq:genCompos}) (up to linear transformation of the output) with a product of diagonal matrices $(D_k)$ each associated to $f_k$.
	Consider the linear generative model of dimension $d-1$ with $d$ prime number
	\begin{equation}\label{eq:genLin}
		\boldsymbol{X} = \mathbf{\rm A B} \boldsymbol{Z} = {\rm diag}(a){\rm diag}(b) \boldsymbol{Z} 
	\end{equation}
	%\vspace*{-.1cm}
	with $A$, $B$ square positive definite $(d-1) \times (d-1)$ diagonal matrices with diagonal coefficient vectors $a$ and $b$, respectively, and $\boldsymbol{Z}$ a vector of positive independent random variables such that $\mathbb{E} |Z_k|^2=1,\forall k$. 
	%Then any optimal generative diagonal matrix $C^*$ with diagonal coefficients $(c^*_j)$ associated to a given loss (such that the distribution of $\boldsymbol{X} = C^* \boldsymbol{Z}$ is optimal for this loss), can be achieved by any choice of diagonal coefficients $(a_j, b_j)$ such that $c_j^*=a_j b_j$.
\end{model}
Model~\ref{model:ABmodel} can be seen as a Fourier domain version of Model~\ref{model:LTLCNN}, with some technicalities dropped. In particular, we use real positive numbers instead of complex numbers, we drop the zero and negative frequencies by labeling dimensions as $\{1,2,...,d-1\}$ modulo $d$ and considering the multiplicative action of $\mathcal{S}$ on these coordinates. We get analogous results as for Model~\ref{model:LTLCNN} regarding the solution set and $\mathcal{S}$-equivalence (see Corol.~2 %\ref{prop:linCOS} 
and Prop.~7 %\ref{prop:linFaith} 
in App.~B.6).%\ref{sec:addresultsSec2}).

In order to measure genericity in a similar way as for Model~\ref{model:LTLCNN}, the power contrast becomes\footnote{This contrast is used for causal inference with the \textit{Trace Method}  \citep{icml2010_062}, and relates to spectral independence \citet{shajarisale2015}.}
\[
\tilde{\varphi}(B,A)=\tau\left[AB B^\top A^\top \right]=\frac{1}{d-1}\sum_{i=1}^d a_i^2 b_i^2=\left\langle a^2\odot b^2\right\rangle
\]
where $\tau[M]$ is the normalized trace $\frac{1}{d-1}\text{Tr}[M]$. This leads to 
\begin{prop}\label{prop:TR}
	In Model~\ref{model:ABmodel}, the $\mathcal{S}$-generic ratio w.r.t. $\tilde{\varphi}(B,A)$ is
	$\displaystyle \quad	\rho'(B,A)\triangleq \frac{\left\langle a^2\odot b^2\right\rangle}{\left\langle a^2\right\rangle\left\langle b^2 \right\rangle}\,.$
\end{prop}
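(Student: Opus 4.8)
The plan is to evaluate the generic ratio of Definition~\ref{def:gDisent}, specialized to Model~\ref{model:ABmodel}, by computing its numerator and denominator separately and showing the denominator factorizes. Here the role of $f_1$ is played by $B$ (so that $\boldsymbol{V}=B\boldsymbol{Z}$) and that of $f_2$ by $A$, while $\mathcal{G}=\mathcal{S}$ acts on the diagonal vector $b$ by permuting its coordinates according to multiplication modulo $d$; I write $(g\cdot b)_i = b_{\sigma_g(i)}$ where $\sigma_g$ is the permutation of $\{1,\dots,d-1\}$ induced by $g$.

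First I would settle the numerator. Since $A,B$ are diagonal and the $Z_k$ are independent with $\mathbb{E}|Z_k|^2=1$, each output coordinate is $X_i=a_ib_iZ_i$, so $\mathbb{E}|X_i|^2=a_i^2b_i^2$ and the power contrast evaluated on the output distribution equals $\tilde{\varphi}(B,A)=\tau[ABB^\top A^\top]=\langle a^2\odot b^2\rangle$. This also confirms that the functional notation $\tilde{\varphi}(B,A)$ is genuinely a function of $\mathcal{D}_{(B,A)}$, as required for it to serve as a contrast.

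The core of the argument is the denominator $\mathbb{E}_{g\sim\mu_{\mathcal{S}}}\,\tilde{\varphi}(g\cdot B,A)$. As $\mathcal{S}$ is a finite group, its Haar measure $\mu_{\mathcal{S}}$ is uniform over its $d-1$ elements, so
\begin{equation*}
\mathbb{E}_{g\sim\mu_{\mathcal{S}}}\tilde{\varphi}(g\cdot B,A)=\frac{1}{d-1}\sum_{g\in\mathcal{S}}\frac{1}{d-1}\sum_{i}a_i^2\,b_{\sigma_g(i)}^2=\frac{1}{(d-1)^2}\sum_i a_i^2\sum_{g\in\mathcal{S}}b_{\sigma_g(i)}^2\,.
\end{equation*}
The key combinatorial fact, which I regard as the crux of the proof, is that for each fixed $i\in\{1,\dots,d-1\}$ the map $g\mapsto\sigma_g(i)$ is a bijection from $\mathcal{S}$ onto $\{1,\dots,d-1\}$. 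This holds precisely because $d$ is prime: then $\mathcal{S}=(\mathbb{Z}/d\mathbb{Z})^\times=\{1,\dots,d-1\}$ acts on itself by multiplication as the regular (free and transitive) action, so multiplying a nonzero residue by a fixed invertible residue permutes the nonzero residues without ever hitting $0$. Consequently $\sum_{g\in\mathcal{S}}b_{\sigma_g(i)}^2=\sum_{j=1}^{d-1}b_j^2$ is independent of $i$, and the inner sum decouples from $a_i^2$.

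Substituting back, the double sum factorizes as $\bigl(\sum_i a_i^2\bigr)\bigl(\sum_j b_j^2\bigr)/(d-1)^2=\langle a^2\rangle\langle b^2\rangle$, so the denominator equals $\langle a^2\rangle\langle b^2\rangle$ and dividing the numerator $\langle a^2\odot b^2\rangle$ by it yields $\rho'(B,A)=\langle a^2\odot b^2\rangle/(\langle a^2\rangle\langle b^2\rangle)$, as claimed. I expect the only delicate point to be this primality-based orbit-averaging: the decoupling of the averaged product into a product of averages relies entirely on each coordinate's orbit being the whole index set, which would fail if $\mathcal{S}$ acted with nontrivial stabilizers or with several orbits, as it would were $d$ composite.
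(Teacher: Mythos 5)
Your proof is correct and follows essentially the same route the paper takes for this family of results: the paper has no standalone proof of this proposition, but its proofs of Propositions~\ref{prop:convFaith} and~\ref{prop:toySIC} rest on exactly your key fact, namely that averaging the permutation representation of $\mathcal{S}=(\mathbb{Z}/d\mathbb{Z})^\times$ over the Haar (uniform) measure yields the all-ones matrix scaled by $1/(d-1)$, because the multiplicative action on nonzero residues modulo a prime is transitive, which is what makes the denominator factor into $\langle a^2\rangle\langle b^2\rangle$. One trivial aside: transitivity alone already gives the decoupling (nontrivial stabilizers would only introduce a uniform multiplicity that cancels in the average), so your closing caveat slightly overstates what is needed, but this does not affect the argument.
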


\subsection{Drift of Over-parameterized Solutions}\label{sec:optToy}
Consider Model~\ref{model:ABmodel} in the (degenerate) case of $1\times 1$ matrices. To make the learning closer to a practical setting, we consider a VAE-like training: conditional on the latent variable $z=Z$, the observed data is assumed Gaussian with fixed variance $\sigma^2$ and mean given by the generator's output $a\cdot b\cdot z$ (e.g. in contrast to \ref{eq:genPair}, noise is added after applying the second function, and one can retrieve the original setting in the limit case $\sigma^2=0$). To simplify the theoretical analysis, we study only the decoder of the VAE, and thus assume a fixed latent value $z=1$, (i.e. the encoder part of the VAE infers a Dirac for the posterior of $Z$ given the data). Assuming the true model $(a^*>0,b^*>0)$, we thus use data sampled from $\mathcal{N}(c=a^*b^*,\sigma^2)$, and learn $(a,b)$ from it, assuming the data is sampled from a Gaussian with same variance and unknown mean parameter. First, considering infinite amounts of data, maximum likelihood estimation amounts to minimizing the KL-divergence between two univariate Gaussians with same variance and different mean, equivalent to:
\begin{equation}\label{eq:toyLoss}
	\underset{a,b>0}{\text{minimize}}\quad \mathcal{L}(c;(a,b))=|c-ab|^2\,.
\end{equation}
%Obviously 
We study the behavior of deterministic continuous time gradient descent (CTGD) in Prop.~8 %\ref{prop:CTGD} 
of App.~B.7. %\ref{app:CTGD}. 
Typical trajectories are represented in red on Fig.~\ref{fig:toyTrajectories}.
We then consider the practical setting of SGD (see App.~B.8) %\ref{app:SGD})
  for training the VAE's decoder on the stochastic objective  % \footnote{In this setting, the more commonly used batch gradient descent would lead to the same form of objective}
\begin{equation}
	\underset{a,b>0}{\text{minimize}}\,\, \ell(c_0;\omega;(a,b))=|C(\omega)-ab|^2,\, C\sim\mathcal{N}(c_0,\sigma^2)\,.
\end{equation}
%and use the sequential update rules of Algorithm~\ref{alg:sgd1} in App.~\ref{app:SGD} with learning rate $\lambda>0$.
%\begin{wrapfigure}{r}{.5\textwidth}
%\begin{minipage}{\textwidth}
%\end{minipage}
%\end{wrapfigure}
The result (green sample path Fig.~\ref{fig:toyTrajectories}) is very different from the deterministic case, as the trajectory drifts along $S_{c_0}$ to asymptotically reach a neighborhood of $(\sqrt{c_0},\sqrt{c_0})$. This drift is likely caused by asymmetries of the optimization landscape in the neighborhood of the optimal set $S_{c_0}$. This phenomenon relates to observations of an implicit regularization behavior of SGD \citep{zhang2016understanding, neyshabur2017geometry}, as it exhibits the same convergence to the minimum Euclidean norm solution. We provide a mathematical characterization of the drift in Prop.~9 %\ref{prop:SGDbias} 
(App.~B.8). %\ref{app:SGD}).
%Since for $\lambda$ small enough, $(1-4\lambda^2(\sigma'^2+\sigma^2))<1$, 
This result states that an SGD iteration makes points in the neighborhood of $S_{c_0}$ evolve (on average) towards the line $\{a=b\}$, such that after many iterations the distribution concentrates around $(\sqrt{c_0},\sqrt{c_0})$.
Interestingly, if we try other variants of stochastic optimization on the same deterministic objective, we can get different dynamics for the drift, suggesting that it is influenced by the precise algorithm used (see App.~B.9 %\ref{sec:asgd}
  for the case of Asynchronous SGD (ASGD) and example drift in blue on Fig.~\ref{fig:toyTrajectories}).
%These drift phenomena introduce a form of dependency between the two parameters, $A$ and $B$, of the solution that we will call \textit{parametric entanglement}. We can quantify this as lack of independence between mechanisms as measured with genericity in Sec.~\ref{sec:diaggene}. 
%\begin{wrapfigure}{r}{.6\textwidth}
%	\begin{minipage}[t]{.45\linewidth}		
%		\includegraphics[width=\linewidth,trim=1.3cm 0cm 1.3cm 0cm,clip]{figures/gradToyExpl}
%		\subcaption{\label{fig:toyTrajectories}}
%	\end{minipage}
%	\begin{minipage}[t]{.5\linewidth}
%		\includegraphics[width=\linewidth,trim=1.5cm 0cm 1.5cm 0cm,clip]{figures/eyeTrajectories}
%		\subcaption{\label{fig:eyeTrajectories}}
%	\end{minipage}
%	\caption{(a) Gradient descent trajectories on the toy example of equation~(\ref{eq:toyLoss}), $c=1$. Thick dots indicate initial value. (b) SGD trajectories of several Fourier coefficients for Example~\ref{expl:eyegen}. Final kernels obtained are on left.}
%\end{wrapfigure}
\begin{figure}
	\begin{minipage}[t]{.47\linewidth}		
		\includegraphics[width=\linewidth]{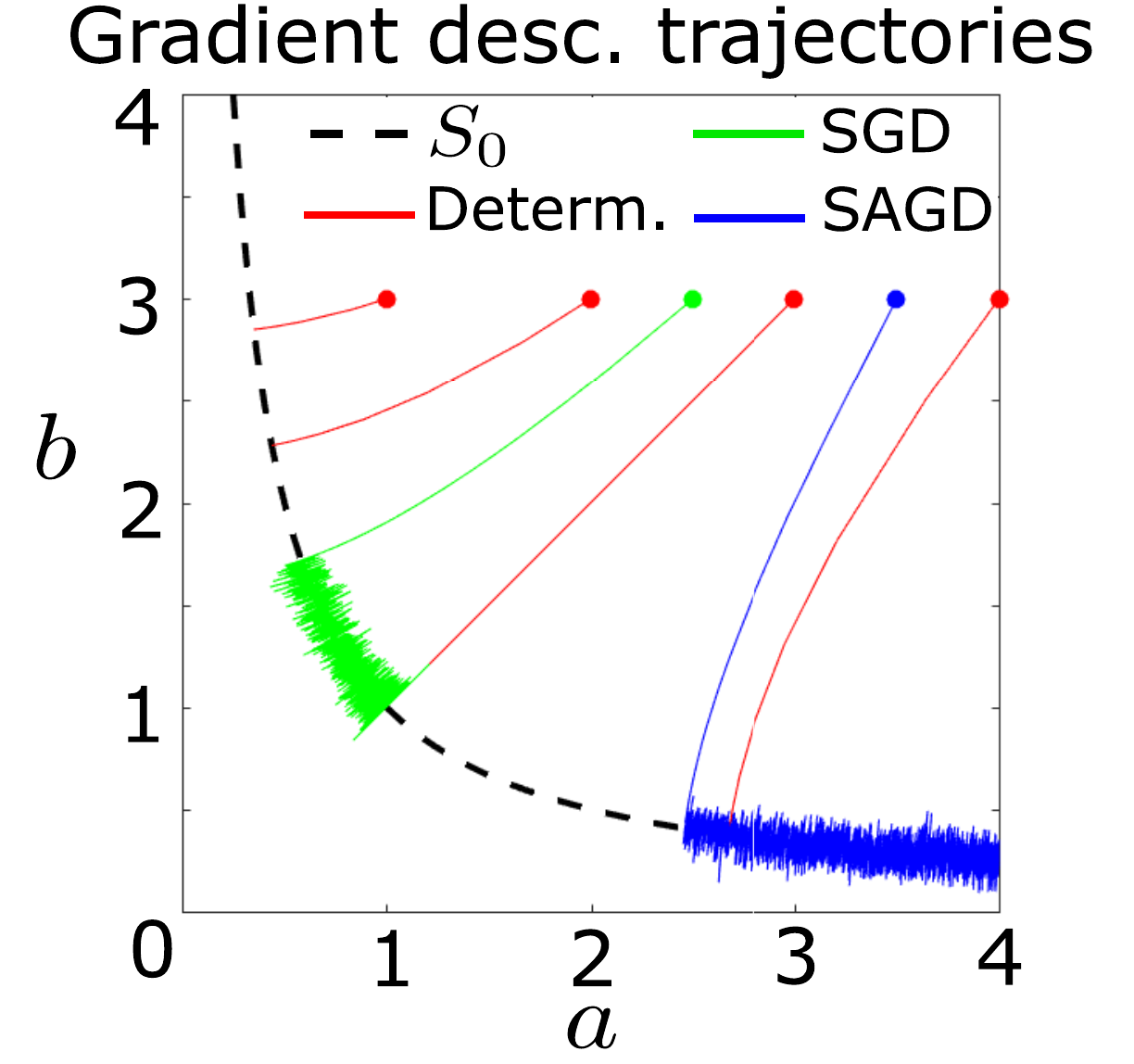}
		\subcaption{\label{fig:toyTrajectories}}
	\end{minipage}\hfill
	\begin{minipage}[t]{.53\linewidth}
		\includegraphics[width=\linewidth]{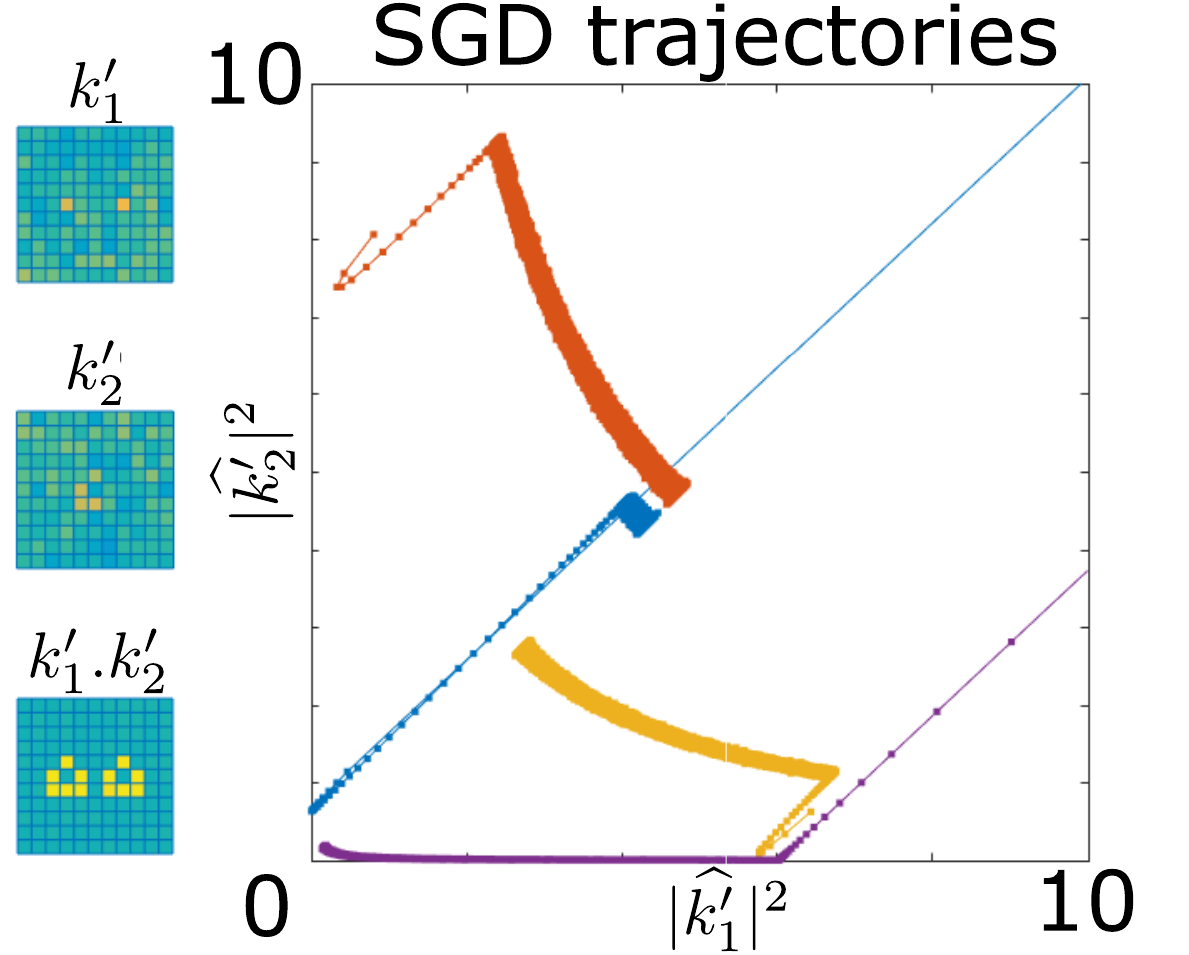}
		\subcaption{\label{fig:eyeTrajectories}}
	\end{minipage}
	\caption{(a) Gradient descent trajectories on the toy example of equation~(\ref{eq:toyLoss}), $c=1$. Thick dots indicate initial value. (b) SGD trajectories of several Fourier coefficients for Example~\ref{expl:eyegen}. Final kernels obtained are on left.}
\end{figure}

We now get back to the multidimensional setting for Model~\ref{model:ABmodel}. %to provide support to our intuition that the drift phenomena observed in Section~\ref{sec:optToy} leads to parametric entanglement. 
The above SGD results trivially apply to each component, which evolve independently from each other. Importantly, the next proposition shows that the SGD solution then drifts towards the matrix square root solution $\sqrt{A^*B^*}$ for both factors, leading to a violation of genericity. 
\begin{prop}\label{prop:TRlim}
	In Model~\ref{model:ABmodel}, assume diagonal coefficients of the true parameters $A^*$ and $B^*$ are i.i.d. sampled from two arbitrary non constant distributions. Then, the approximation of SGD solution $A=B=\sqrt{A^*B^*}$ satisfies 
	\[
	\rho'(B,\,A)\!\underset{d\rightarrow+\infty}{\longrightarrow}\! \mathbb{E}[c_1^2]/\mathbb{E}[c_1]^2> 1, \mbox{and is thus not $\mathcal{S}$-generic.
	}
	\]
\end{prop}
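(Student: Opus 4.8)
The plan is to substitute the claimed SGD limit $A=B=\sqrt{A^*B^*}$ directly into the generic ratio $\rho'$ of Proposition~\ref{prop:TR} and then pass to the $d\to\infty$ limit using the law of large numbers. Write $c_i=a_i^*b_i^*$ for the $i$-th true product coefficient. The square-root solution has $a_i=b_i=\sqrt{c_i}$, so that $a_i^2=b_i^2=c_i$ and $a_i^2b_i^2=c_i^2$. Substituting these into the formula $\rho'(B,A)=\langle a^2\odot b^2\rangle/(\langle a^2\rangle\langle b^2\rangle)$ collapses it to
\[
\rho'(B,A)=\frac{\langle c^2\rangle}{\langle c\rangle\,\langle c\rangle}=\frac{\frac{1}{d-1}\sum_i c_i^2}{\left(\frac{1}{d-1}\sum_i c_i\right)^2}\,,
\]
which depends only on the empirical first and second moments of the $c_i$.

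Next I would invoke the distributional hypothesis. Since the $a_i^*$ are i.i.d., the $b_i^*$ are i.i.d., and the two families are independent, the products $c_i=a_i^*b_i^*$ form an i.i.d. sequence. Assuming finite second moments (which I would state explicitly as part of the ``non-constant distribution'' hypothesis), the strong law of large numbers gives $\langle c\rangle\to\E[c_1]$ and $\langle c^2\rangle\to\E[c_1^2]$ almost surely as $d\to\infty$; because each $c_i>0$, the denominator is bounded away from $0$, so the continuous mapping theorem yields $\rho'(B,A)\to\E[c_1^2]/\E[c_1]^2$, which is exactly the claimed limit.

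Finally I would establish the strict inequality $\E[c_1^2]/\E[c_1]^2>1$. By independence, $\E[c_1]=\E[a_1^*]\E[b_1^*]$ and $\E[c_1^2]=\E[(a_1^*)^2]\E[(b_1^*)^2]$, so it suffices to show $\E[(a_1^*)^2]\E[(b_1^*)^2]>\E[a_1^*]^2\E[b_1^*]^2$. Using $\E[(b_1^*)^2]\ge\E[b_1^*]^2$ together with positivity of $\E[(a_1^*)^2]$, and then $\E[(a_1^*)^2]>\E[a_1^*]^2$ (strict because $a_1^*$ is non-constant) together with positivity of $\E[b_1^*]^2$ (since $b_1^*>0$), gives the strict inequality in one chain. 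Hence the limit exceeds $1$, so $\rho'\neq1$ and the solution violates $\mathcal{S}$-genericity. The argument is essentially routine: the only points requiring care are making the finite-second-moment assumption explicit so that the law of large numbers applies, and deriving the strict positivity of $\mathrm{Var}(c_1)$ from non-constancy of the factors via the short independence computation above; I expect this strict-inequality step to be the main (minor) obstacle.
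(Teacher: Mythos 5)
Your proof is correct and follows the route the statement itself dictates (the paper's appendix does not actually spell out a proof for this proposition, but substituting $a_i=b_i=\sqrt{c_i}$ into $\rho'$, applying the law of large numbers to $\langle c\rangle$ and $\langle c^2\rangle$, and getting strictness of $\mathbb{E}[c_1^2]>\mathbb{E}[c_1]^2$ from independence plus non-constancy of the factors is clearly the intended argument). Your two points of care are well placed: finite second moments must indeed be assumed for the almost-sure limit to be the stated finite ratio, and mutual independence of the $a_i^*$ and $b_i^*$ families (implicit in the proposition) is what lets you factor the moments and conclude $\mathrm{Var}(c_1)>0$.
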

%Proof is a basic application of the law of large numbers and Jensen's inequality.
The solution chosen within $S_c$ by the SGD algorithm is thus suboptimal for extrapolation.

\subsection{Extension to Convolutional Model~\ref{model:LTLCNN}}
We show qualitatively how the above observations for Model~\ref{model:ABmodel} can provide insights for Model~\ref{model:LTLCNN}. Using the same VAE-like SGD optimization framework for this case, where we consider $\boldsymbol{Z}$ deterministic, being this time a Dirac pixel at location $(0,0)$. We apply the DFT to $\boldsymbol{X}$ in Model~\ref{model:LTLCNN} and use the Parseval formula to convert the least square optimization problem to the Fourier domain (see App.~B.4). %\ref{sec:dft}).
 Simulating SGD of the real and imaginary parts of $\widehat{k}_1$ and $\widehat{k}_2$, we see in Fig.~\ref{fig:eyeTrajectories} the same drift behavior towards solutions having identical squared modulus ($|\widehat{k}_1|^2=|\widehat{k}_2|^2$), as described for Model~\ref{model:ABmodel} in Sec.~\ref{sec:optToy}, reflecting the violation of $\mathcal{S}$-genericity by SGD of Prop.~\ref{prop:TRlim}. As 
\begin{equation}\label{eq:trFreq}
	\rho'(|\widehat{k}_1^*|^2,|\widehat{k}_2^*|^2)= \rho(k_1^*,k_2^*)\,.
\end{equation}
this supports a violation of $\mathcal{S}$-genericity for the convolution kernels, such that the SGD optimization of Model~\ref{model:LTLCNN} is also suboptimal for extrapolation.
\subsection{Enforcing Spectral Independence}
\label{sec:opt}
In order to enforce genericity and counteract the effects of SGD, we propose to alternate the optimization of the model parameters with SDR-based genericity maximization. To achieve this, we multiply the square difference between the SDR and its ideal value of 1 by the normalization term $\left\langle|\widehat{k}_2^i|^2\right\rangle$ and alternate SGD steps of the original objective with gradient descent steps of the following problem 
%\begin{equation}
%  \label{eq:proxy}
%  \left\langle
%    |{\widehat{\rm f}}(u,v)|^2 \left( \frac{|\frac{1}{B} \sum_{i=0}^{(B-1)}
%      \widehat{\rm x_i}(su,sv)|^2}{\langle |\frac{1}{B} \sum_{i=0}^{(B-1)} \widehat{\rm x_i}(su,sv)|^2\rangle}-1 \right) \right\rangle^2
%\end{equation}
\begin{equation}
	\label{eq:proxy}
	\textstyle
	\underset{\widehat{k}_2}{\rm minimize} 
	\left(\rho(k_1^*,k_2^*)-1\right)^2\langle |  \widehat{k_2}|^2\rangle^2
	=\left\langle \!\!
	|{\widehat{{k}_2}}|^2 \!\odot\!\left(\!\!\frac{ | \widehat{k_1}|^2  }{\langle|\widehat{k_1}|^2\rangle}\!\! -1 \! \right) \!\!\right\rangle^2\!\!.
\end{equation}
Performance of this procedure is investigated in App.~B.12.%\ref{app:addresOpt}.

%\begin{figure}
%	\centering
%	\includegraphics[width=\columnwidth]{figures/eyeTrajectories}
%	\caption{SGD trajectories of several Fourier coefficients for Example~\ref{expl:eyegen}. Kernels obtained at the end of optimization are represented on left.\label{fig:eyeTrajectories}}
%\end{figure}

\section{Experiments on Deep Face Generators}\label{sec:expDeep}
We empirically assess extrapolation abilities of deep convolutional generative
networks, in the context of learning the distribution of CelebA. We used a plain
$\beta$-VAE\footnote{\tiny\url{https://github.com/yzwxx/vae-celebA}} (\cite{higgins2016beta}) and the official tensorlayer DCGAN
implementation\footnote{\tiny\url{https://github.com/tensorlayer/dcgan}}. The general structure of the VAE is summarized in Suppl. 
Fig.~1b %\ref{fig:vaeGenStruct} 
and the DCGAN architecture is very similar (details in~ Suppl. Table~2). %\ref{tab:netparams}).
Unless otherwise stated, our analysis is done on the generative architecture of these models (VAE encoder, GAN generator). We
denote the 4 different convolutional layers as indicated in Suppl. Fig.~1b: %\ref{fig:vaeGenStruct}: 
coarse (closest to latent variables), intermediate, fine and image level. The theory developed in previous sections was adapted to match these applied cases, as explained in App.~B.10. %\ref{app:deepmethods}.

%\subsection{Evaluating extrapolation across spatial scales}
\subsection{Stretching Extrapolations}
Extrapolations were performed by applying a 1.5 fold horizontal stretching transformation to all maps of a given hidden convolutional layer and compare the resulting perturbed image to directly stretching to the output sample. 
%\subsubsection{Scale of convolutional layers}
%\ifarxiv
%\begin{figure*}
%		\includegraphics[width= .95\linewidth]{figures/VAEdistortv2.pdf}
%		\caption{VAE stretching extrapolations. (a-b) VAE stretched samples (see text). (c-d) Left: Distortion energy at different scales (1:coarsest, 5:finest). Right: pixel difference. (f) Evolution of MSE when stretching at the fine level, with and without {SDR optimization}. See extended version Supp. Fig.~\ref{fig:extVAEpert}.\label{fig:distort}}
%\end{figure*}
%\else
%\begin{figure*}
%	\begin{minipage}{.7\linewidth}
%\includegraphics[width= 1.1\linewidth]{figures/VAEdistortv2.pdf}
%\end{minipage}\hfill
%		\begin{minipage}{.24\linewidth}
%	\caption{VAE stretching extrapolations.\\ (a-b) VAE stretched samples (see text). (c-d) Left: Distortion energy at different scales (1:coarsest, 5:finest). Right: pixel difference. (f) Evolution of MSE when stretching at the fine level, with and without {SDR optimization}. See extended version Supp. Fig.~\ref{fig:extVAEpert}.\label{fig:distort}}
%	\end{minipage}
%\end{figure*}
%\fi
\begin{figure*}
	\centering	\includegraphics[width= .9\linewidth]{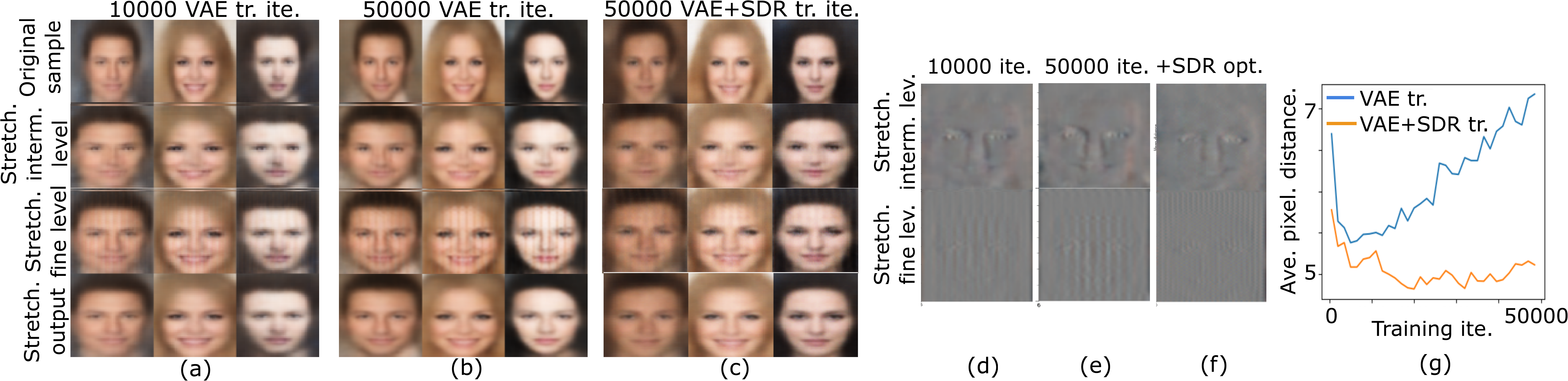}
	\caption{VAE stretching extrapolations. (a-c) VAE extrapolation samples (a-b classical VAE training, c using VAE interleaved with SDR optimization, see text). (d-f) Pixel difference between stretched output and extrapolated samples. (g) Evolution of MSE when stretching at the fine level, with and without {SDR optimization}. See also Suppl. Fig.~3.%\ref{fig:extVAEpert}.
		\label{fig:distort}}
\end{figure*}
The extrapolated images obtained by distorting convolutional layers' activation maps are presented in the two middle rows of  Fig.~\ref{fig:distort}a for the VAE trained with 10000 iterations. Note the top and bottom rows respectively correspond to the original output samples, and the result of trivially applying stretching directly to them (these are only provided for comparison with respect to extrapolated samples). This affects differently features encoded at different scales of the picture:  stretching the intermediate level activation
maps (second row of Fig.~\ref{fig:distort}a) mostly keeps the original
dimensions of each eye, while inter-eye distance stretches in line with extrapolation ability that we introduced in Sec.~\ref{sec:extrapol} (Model~\ref{model:LTLCNN}). This suggests that the detrimental effect of SGD optimization investigated in Sec.~\ref{sec:optToy} did not affect this layer. One plausible interpretation of this good extrapolation behavior is the fact that, in contrast with our toy examples, the intermediate level layer contains a large number of channels trained in parallel trough backpropagation. This may limit the propensity of the overparameterized solutions associated to a single channel to drift, due to the multiple pathways exploited during optimization. In contrast, extrapolation of the fine level activation
maps (second row of Fig.~\ref{fig:distort}a), results in slight vertical artifacts; a weaker extrapolation capability possibly related to the smaller number of channels in this layer.  Interestingly, Fig.~\ref{fig:distort}b replicating the result but after 40000 additional training iterations shows perturbed images of poorer quality for this layer. This suggests, as predicted in Section~\ref{sec:optToy}, a decrease of extrapolation capabilities with excessive training, as the drifting regime shown in Fig.~\ref{fig:eyeTrajectories} takes over. In particular, stronger periodic interference patterns like in Fig.~\ref{fig:scaleEyeIllust} (bottom row) appear for the stretching of the fine level hidden layer, which comprises fewer channels, and are thus likely to undergo an earlier drift regime (compare Figs.~\ref{fig:distort}b vs.~\ref{fig:distort}a, 3rd row). To quantify this effect, we tracked the evolution (as the number of iterations grows) of the mean square errors for the complete picture (Fig.~\ref{fig:distort}g), resulting from the stretch of the fine level convolutional layer. This difference grows as the training progresses and the same trend can be observed for the mean squared error of the complete picture.

We next investigated whether enforcing more $\mathcal{S}$-genericity between layers during optimization can temper this effect.
We trained a VAE by alternatively minimizing spectral dependence of eq.~(\ref{eq:proxy}) at image, fine
and intermediate levels, interleaved with one SGD iteration on the VAE objective. Fig.~\ref{fig:distort}c,g show a clear effect of spectral independence minimization on limiting the increase in the distortions as training evolves. This is confirmed by the analysis of pixel difference for 50000 iterations, as seen in Fig.~\ref{fig:distort}f: perturbations of the
intermediate and fine level exhibit better localization, compared to what was obtained at the same number of iterations (Fig.~\ref{fig:distort}e) with classical VAE training, supporting the link between extrapolation and $\mathcal{S}$-genericity of Sec.~\ref{sec:scaleIM}. See also App.~B.13.%\ref{app:addresDeep}.
%

%\begin{comment}
%\subsection{Extrapolation of specific features}
\michel{reoved stretch of specif feature for now}
%To justify that extrapolation as introduced in Sec.~\ref{sec:metaclass} and illustrated in Example~\ref{expl:eyegen} is relevant in the context of deep generative models, we show now apply stretching to specific visual features. For that we rely on the approach of \citet{besserve2018counterfactuals} to identify modules of channels in hidden layers that encode specific properties of the output images in a disentangled way. We apply this procedure on the VAE described above, and identify a group of channels distributed across hidden layers encoding eyes. We then applied the horizontal stretch described in previous sections, but only to activations of the channels in the intermediate layer that belong to the module encoding properties of the eyes. The resulting counterfactual samples, shown on Suppl. Fig.~\ref{fig:extrapolFace} (top panel), exhibit faces with disproportionate eyes, in the vein of the deformations that illustrators often apply to fictional characters that can be observed for examples in cartoons or animation movies.
\subsection{Genericity of Encoder versus Decoder}
The above qualitative results suggest that extrapolation capabilities %, %as defined by our framework, 
are observable to some extent in vanilla generative architectures (the decoder of a VAE), but vary depending on the layer considered and can be improved by SDR optimization. We complement these qualitative observations by a validation of the genericity assumption based on the comparison with "inverse" architecture (the encoder of a VAE, see App.~B.10), %\ref{app:deepmethods}),
 in line with Sec.~\ref{sec:anticausal}. We study the distribution of the SDR statistic between all possible (filter, activation map) pairs in a given layer. The result for the VAE is shown in Fig.~\ref{fig:histSICVAEtrainingComp},
exhibiting a mode of the SDR close to 1 - the value of ideal spectral independence - for layers of the decoder, which suggests genericity of the convolution kernels between successive layers. 
Interestingly, the encoder, which implements convolutional layers of the same
dimensions in reverse order, exhibits a much broader distribution of the SDR at all levels,
especially for layers encoding lower level image features.  This is in line with results stating 
presented in Sec.~\ref{sec:anticausal}, 
that if a mechanism (here the generator) satisfies the principle of independent causal mechanisms, the inverse mechanism (here the encoder) will not \citep{shajarisale2015}. % are more entangled.
In supplemental analysis, (App.~B.13, %\ref{app:addresDeep},
 Suppl. Fig.~5), %\ref{fig:histSICGANtrainingComp}), 
 we performed the same study on GANs. %While genericity is slightly better in the generator, it is also rather good for the discriminator, which does not perform an inverse mapping of the generator. 

%Next, we thus can investigate extrapolation capabilities formalized in Section~\ref{sec:metaclass} by perturbing hidden layers.

\begin{figure}
	%\hspace*{-.5cm}	
	\includegraphics[width=1\linewidth]{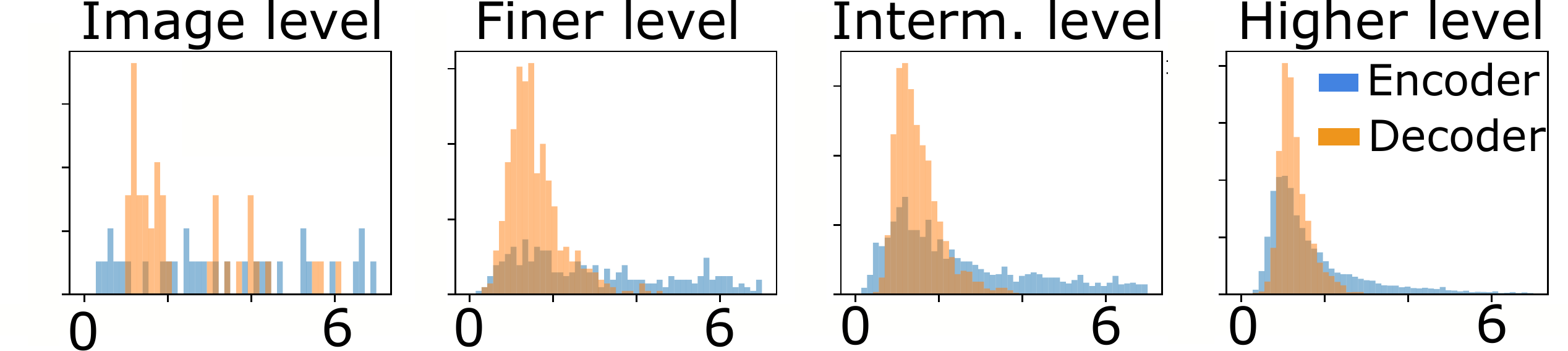}
	\caption{Superimposed SDR histograms of trained VAE decoder and encoder for different hidden layers.\label{fig:histSICVAEtrainingComp}}
\end{figure}

%\ifarxiv
\textbf{Conclusion.}
%Overall, results support spectral independence is a interesting indicator of independence of mechanisms in convolutional generative models, offering potential to quantify and improve extrapolation capabilities. 
Our framework to study extrapolation abilities of multi-layered generators based on \textit{Independence of Mechanisms} replaces causal identifiability by a milder constraint of genericity, and shows how SGD training may be detrimental to extrapolation. %, still real of intervention 
Experiments are consistent with these insights and support spectral independence is a interesting indicator of IM in convolutional generative models. %suggest ICM can be enforced to counter over-parameterization effects. %Enforcing this independence in VAEs likely helps separation of parameters encoding features at different spatial scales in such models. 
This provides insights to train statistical models that better capture the mechanisms of empirical phenomena. 
%\fi
%\input{experimentsArXiv}

\newpage
\section*{Ethical Impact}
Although this work is mostly theoretical and conceptual, we anticipate the following impact of this research direction. First, our work addresses how to enforce a causal structure in generative models trained from data. This allows developing statistical models that can better capture the outcomes of previously unseen perturbations to the system that generated the data, and as a consequence can have a positive impact on our ability to learn from observed data in context where experiments are impossible for ethical and practical reasons. Our focus on the notion of extrapolations is particularly suited to be investigate unprecedented climatic, economical and societal challenges facing humankind in the near future. Additionally, augmenting the learning algorithms of artificial systems with causal principles may allow more autonomy and robustness when facing novel environment, possibly leading to both positive and negative societal outcomes. Our approach however proposes a way to understand, formulate and control what kind of robustness should or should not be enforced, providing decision makers with information to guide their choices.

\bibliographystyle{aaai21}

{\small 
\bibliography{ganSIC}}

%\ifarxiv
\onecolumn
\newpage
\addcontentsline{toc}{section}{Appendix} % Add the appendix text to the document TOC
\part{Technical appendix\\ \huge \itshape A Theory of Independent Mechanisms for Extrapolation \\ in Generative Models} % Start the appendix part
\parttoc % Insert the appendix TOC
\appendix
\setcounter{figure}{0}   

%\ifarxiv

\captionsetup[figure]{labelfont={bf},name={Supplemental Figure},labelsep=period}

\section*{General information}
\addcontentsline{toc}{section}{General information}
\subsection*{Computational tools}
%In addition to key routines available as supplementary files, a code repository is available at \url{https://anonymous.4open.science/r/4612a084-01d3-4dc2-b1eb-4b6a03f29169/}.
%\addcontentsline{toc}{subsection}{Code description}
We provide code for the experiments in the archive \url{https://gitlab.tuebingen.mpg.de/besserve/code-repository/-/blob/main/zip-files/AAAI21code.zip}. Routines associated to each figure can be found in the subfolder with the corresponding name and are detailed in the sections below.

\subsubsection*{Core routines and dependencies}
The python routines require \textit{numpy}, \textit{tensorflow} and multiple associated standard libraries detailed at the beginning of each routine. In addition, libraries provided by authors of \citet{besserve2018counterfactuals} and custom libraries for this project are provided in the \textit{libs} sub-folders.

\subsubsection*{Deep networks and associated hyperparameters}
Links to implementations of the generative models we used (VAE and GANs) are provided in main text. We base our analysis on checkpoints resulting from the training of each model (see Appendix~\ref{app:modelhyp} for architecture details). All hyperparameters for the analyses of these networks are provided in the hyperparameter files that can be found at the root of the \textit{libs} sub-folder of \textit{code.zip}. The choice of hyperparameters reflects the choices made by the authors of the original implementation, as well as our effort to achieve the best trade-off between visual quality and image diversity (e.g. choice of the $\beta$ parameter in the $\beta$-VAE). In the context of paths provided in the hyperparameter files, the \textit{celebA} dataset as well as checkpoint folders are assumed to be located in the \texttt{/data} folder, which needs to modified to match the local computer settings.

%\subsubsection*{Figure 1}
%The python routine for this figure is \texttt{script\_clusteredExtrapol\_VAE.py}.
%
%\subsubsection*{Figure 2}
%The Matlab routine \texttt{simulFaithEye.m} plots the examples of this figure.
%
%\subsubsection*{Figure 3}
%The Matlab routines \texttt{simProdGrad.m} and \texttt{simulFaithEyeOpt.m} simulates and plots the results of panel (a) and (b), respectively.
%
%\subsubsection*{Figure 5}
%The python routine for this figure is \texttt{vae\_SDRdistribution.py}.
%
%\subsubsection*{Figure 6/Suppl. Figure 9}
%The python routine for this figure is \texttt{vae\_mult\_scale.py}.
%
%\subsubsection*{Suppl. Figure 8}
%The python routine for this figure is \texttt{gan\_SDRdistribution.py}.
%
%\subsubsection*{Suppl. Figure 10}
%The python routine for this figure is \texttt{script\_clusteredExtrapol\_VAEstretcheye.py}.

\subsection*{Computing infrastructure}
\addcontentsline{toc}{subsection}{Computing infrastructure}
Training of generative models was performed on a cluster with the following properties:
\begin{itemize}
	\item 42 Nvidia K20x GPGPUs,
	\item 80 Nvidia K80 GPGPUs,
	\item 8 Nvidia P40 GPGPUs,
	\item 20 Nividia P100 GPGPUs,
	\item 24 Nvidia V100 GPGPus,
	\item HTCondor as scheduler.
\end{itemize} 
While the use of the cluster was useful to reduce training time, training was occasionally performed on the following desktop architecture in a matter of few days.

Baring the training of the generative models, analysis was run on a desktop with the following characteristics (no GPU was exploited):
\begin{itemize}
	\item CPU: Quad core Intel Core i7-4790K, cache: 8192KB, clock speeds: max: 4400 MHz.
	\item  Memory: 32Gb.
\end{itemize}

%\fi
\newpage

%\subsection*{Supplementary files}
\subsection*{Main text symbols and acronyms}
\addcontentsline{toc}{subsection}{Main text symbols}
%\begin{table}[h]
\captionof{table}{\textbf{List of symbols and acronyms.}\label{tab:symbols}}
{\hfill
\begin{tabular}{llc}
	\scriptsize	Abbrev./Symbol & Name & Eq.
	\\\hline
	NF & Normalizing Flow &\\
	$\circ$ & function composition& \ref{eq:COS}\\
	$\ostar$ & circular convolution &\ref{eq:circonv} \\
	$\boldsymbol{\theta}^*$ & true parameters&\\
	$S_{\boldsymbol{\theta}^*}$ & solution set & \ref{eq:solSet} \\
	COS / $S^\Omega_{\boldsymbol{\theta}^*}$ & Composed Over-parametrization Set &\ref{eq:COS}\\
	$\mathcal{G}$ & group of transformations for extrapolations & \ref{eq:metaclass}\\
	$\mathcal{M}^{\mathcal{G}}_{\boldsymbol{\theta}^*}$ & extrapolated class & \ref{eq:metaclass}\\
		$\mathcal{S}$ & multiplicative group implementing stretching & Sec.~\ref{sec:metaclass}\\
		$\varphi$ & contrast & \ref{eq:genRatio}\\
%	TR & Trace Ratio & \ref{eq:TR}\\
	$\rho$ & Generic/Spectral Density Ratio (SDR) & \ref{eq:genRatio}/\ref{eq:sdr}\\
	$\odot$ & entrywise product & \ref{eq:circonv}
\end{tabular}\hfill}
%\end{table}

\section{Proofs of main text propositions}\label{sec:proofs}%from section \ref{sec:optToy}}

%\subsection{Proof of Proposition~\ref{prop:nonsinglet}}\label{sec:nonsingletproof}
%With $d>2$, since $d$ is assumed prime, it must be odd. Then, one can define the kernel such that $g(1,0)=g(d-1,0)=1$ and zero otherwise. In the Fourier domain, we have $\widehat{g}(n,m)=\cos(2\pi n/d)$. Since $g(m,n)=0$ would require $2\pi n/d=\pm\pi/2$, and $d$ is prime, $g$ does not vanish. As a consequence $g$ admits an inverse kernel $g^{-1}$ with Fourier coefficients $\cos(2\pi n/d)^-1$. Then for a given true parameter $(k_1^*,k_2^*)$, $(g\ostar k_1^*,g^{-1}\ostar k_2^*)$ is a non-trivial alternative choice parameters generating the same distribution of $\boldsymbol{X}$.

\subsection{Proof of Proposition~\ref{prop:COSequalS}}\label{sec:COSequalSproof}

First, we prove $\Omega$ is a group. Let $\Omega_1$ be the set of mappings $\omega: \mathcal{V}\rightarrow\mathcal{V}$ such that $\omega \circ \mathcal{F}_1\subset \mathcal{F}_1$. Then each $\omega\in \Omega_1$ takes the form $\omega = f_1^{-1}\circ f_1'$  for some $f_1,f_1'\in \mathcal{F}_1$ (because the model is NF). This trivially implies that $\Omega_1$ is a subgroup of the bijections  $\mathcal{V}\rightarrow\mathcal{V}$. In the same way, the set $\Omega_2$ of mappings $\omega: \mathcal{V}\rightarrow\mathcal{V}$ such that $ \mathcal{F}_2 \circ \omega \subset \mathcal{F}_2$ is also a subgroup and $\Omega = \Omega_1\cap\Omega_2$ is then also a subgroup, thus it is a group.

Next, we prove $S_{(f_1^*,\,f_2^*)}^\Omega = S_{(f_1^*,\,f_2^*)}$. From its definition $S_{(f_1^*,\,f_2^*)}^\Omega \subseteq S_{(f_1^*,\,f_2^*)}$ is trivial. It remains to prove $S_{(f_1^*,\,f_2^*)} \subseteq S_{(f_1^*,\,f_2^*)}^\Omega$. Assume $(f_1,\,f_2)\in S_{(f_1^*,\,f_2^*)}$, then $f_2\circ f_1= f_2^* \circ f_1^*$ by the functional identifiability assumption.

As a consequence, $f_2 = f_2^*\circ \omega $ and $f_1 = \omega^{-1} \circ f_1^*$ for $\omega = f_1^*\circ f_1^{-1}= (f_2^{-1}\circ f_2^*)^{-1}$. From these expression, it is clear that $\omega \in  \Omega_1\cap\Omega_2 = \Omega$, which implies $(f_1,\,f_2)\in S_{(f_1^*,\,f_2^*)}^\Omega$\,.
\qed

\subsection{Proof of Proposition~\ref{prop:convFaith}}
We do the proof in the 1D case, which trivially generalizes to 2D images without fundamental differences. Given $d$ prime number, the multiplicative group of non-zero integers modulo $d$ is cyclic (see App.~\ref{app:stretchGrp}), there exists a $(d-1)\times (d-1)$ permutation matrix $P$ such that the stretching action of integer $g$ on $\widehat{k}_1$ the DFT of kernel $k_1$, turns it into 
\begin{equation}\label{eq:stretchRep}
g\cdot \widehat{k_1}= \left[
\begin{matrix}
P_\sigma^g &0& \boldsymbol{0}\\
\boldsymbol{0} &1& \boldsymbol{0}\\
\boldsymbol{0} & 0&P^g
\end{matrix}\right]
 \widehat{k_1}
\end{equation}
where $P_\sigma$ is the permutation matrix corresponding to $P$ with rows and columns in reverse order (to model the action on negative integers).

Now, assume $(\widehat{k_1},\widehat{k_2})$ is $\mathcal{S}$-equivalent to  $(\widehat{k_1}^*,\widehat{k_2}^*)$, then there exist a permutation $s$ of the elements of $\mathcal{S}$ such that for all $g\in \mathcal{S}$
\[
g\cdot \widehat{k_1} \odot \widehat{k_2}=s(g)\cdot \widehat{k_1}^* \odot \widehat{k_2}^*\,.
\]
Averaging over all values of $g$, we get that
\[
\frac{1}{d-1}\left[
\begin{matrix}
\boldsymbol{1} &0& \boldsymbol{0}\\
\boldsymbol{0} &1& \boldsymbol{0}\\
\boldsymbol{0} & 0&\boldsymbol{1}
\end{matrix}\right]\cdot \widehat{k_1} \odot \widehat{k_2}=
\frac{1}{d-1}\left[
\begin{matrix}
\boldsymbol{1} &0& \boldsymbol{0}\\
\boldsymbol{0} &1& \boldsymbol{0}\\
\boldsymbol{0} & 0&\boldsymbol{1}
\end{matrix}\right]\cdot \widehat{k_1}^* \odot \widehat{k_2}^*\,,
\]
where $\boldsymbol{1}$ are matrices with all coefficients equal to one. Then for all strictly positive frequencies $\nu$ (strictly negative ones lead to the conjugate expression), this leads to 
\[
\widehat{k_2}[\nu]=\frac{\left\langle \widehat{k_1}^* \right\rangle}{\left\langle \widehat{k_1} \right\rangle}\widehat{k_2}^*[\nu]=\lambda \widehat{k_2}^*[\nu]\,,
\]
where $\left\langle .\right\rangle$ denotes averaging over positive frequencies. Both pairs being solutions of the problem, this implies also
\[
\widehat{k_1}[\nu]=\lambda^{-1} \widehat{k_1}^*[\nu]\,.
\]
The converse implication is straightforward.
\qed

\subsection{Sketch of the proof of Proposition~\ref{prop:toySIC}}
The expression	in eq.~(\ref{eq:sdr}) can be derived using the same principles as in Prop.~\ref{prop:convFaith}. Briefly, we can write the denominator of the generic ratio as:
\[
\mathbb{E}_{g\in\mu_{\mathcal{S}}} \left\langle  |\widehat{k}_2\odot\left(g\cdot \widehat{k}_1\right)  \odot \widehat{\boldsymbol{Z}}|^2 \right\rangle= \left\langle  |\widehat{k}_2\odot\mathbb{E}_{g\in\mu_{\mathcal{S}}}\left[g\cdot \widehat{k}_1\right]  \odot \widehat{\boldsymbol{Z}}|^2 \right\rangle
\]
 As $\widehat{\boldsymbol{Z}}$ as constant modulus 1, and as integrating over the Haar measure $\mu_{\mathcal{S}}$ of such discrete group corresponds to averaging over all (finite) group elements, we can exploit the representation of eq.~(\ref{eq:stretchRep}) to obtain the result.

Next, to prove $\mathcal{S}$-genericity, we observe that realizations of $V= k_1\ostar Z$ are discrete 2D images consisting in unit discrete Diracs located at two different pixels.
Without loss of generality, we work on one realization $v$ and assume one of these pixels is located at coordinate $(0,\,0)$ and the other at coordinate $(m_0,\,n_0)$.
Then the squared Discrete Fourier Transform (DFT) of $f$ writes
\[
\left|\mathbf{F}v(u,v)\right|^2=\frac{1}{d^2}\left(2+2\cos(2\pi (m_0 u+n_0 v))\right)
\]
and its sum over frequencies is 
\[
\left\langle\left|\mathbf{F}v(u,v)\right|^2\right\rangle = 2
\]

\begin{equation}
\label{eq:sdrSI}
\rho{(\boldsymbol{V}, {k_2})}=\frac{\langle\mathbb{E} |\widehat{k}_2\odot \widehat{\boldsymbol{V}}|^2\rangle  }{\langle \mathbb{E}|  \widehat{\boldsymbol{V}}|^2\rangle \langle |\widehat{k}_2|^2 \rangle }=\frac{\langle |\widehat{k}_2\odot \widehat{k}_1|^2\rangle  }{\langle |\widehat{k}_1|^2\rangle \langle |\widehat{k}_2|^2 \rangle }\,,
\end{equation}

Without loss of generality we assume that $k_2$ has unit energy, such that the SDR writes
\[
\rho = \frac{\left\langle\left|\mathbf{F}k_2(u,v)\mathbf{F}g(u,v)\right|^2\right\rangle}{\left\langle\left|\mathbf{F}g(u,v)\right|^2\right\rangle} = \left\langle\left|\mathbf{F}k_2(u,v)\mathbf{F}g(u,v)\right|^2\right\rangle/2
\]
Using the Fourier convolution-product calculation rules, this terms also corresponds to the value of circular convolution $(k_2\ostar k_{2,\sigma}) \ostar (v\ostar v_\sigma)/2$ at index $(0,0)$, where $k_{2,\sigma}$ denotes mirroring of both spatial axes. Since the support of $f$ is bounded by a square of side $\delta$, the support of $(k_2\ostar k_{2,\sigma})$ is bounded by a square of side $2\delta$. Convolution of this quantity by $(v\ostar v_\sigma)/2$ (which is a sum of one central unit Diracs at $(0,\,0)$ and two side Diracs at $\pm (m_0,\,n_0)$), yields a superposition of one central pattern  $(k_2\ostar k_{2_\sigma})$ and two translated versions of this term around $\pm (m_0,\,n_0)$, and this pattern is additionally periodized with period d along both dimensions (due to circularity of the convolution). Since by assumption $2\delta<\max(m_0,\,n_0,\,d-m_0,\,d-n_0)$, then the supports of the translated terms, as well as their periodized copies, do not reach index $(0,0)$, and the value of $\rho$ is given by the central term
\[
\rho = k_2\ostar k_{2,\sigma}(0,\,0) = 1\,,
\]
due to the unit energy assumption on $k_2$.
\qed

\section{Additional methods}\label{sec:methods}
\subsection{\textit{FluoHair} experiment}\label{app:fluohair}
To obtain the result of Fig.~\ref{fig:fluohair}, we proceeded as follows.
We ran the clustering of hidden layer channels into modules encoding different properties, using the approach proposed by \citet{besserve2018counterfactuals} using the non-negative matrix factorization technique and chose a hyperparameter of 3 clusters. For the last hidden layer of the generator, we identified the channels belonging to the cluster encoding hair properties. We then identified and modified the tensor encoding the convolution operation of the last layer (mapping the last hidden layer to RGB image color channels, as described in Fig.~\ref{fig:fluohair}), by changing the sign of the kernel coefficients corresponding to inputs originating form the identified hidden channels encoding hair. This generates pink hair. In order to change color to green (or blue), for the same coefficients, we permute in addition the targeted color channels (between red, green and blue).
\michel{add details on the used architecture}

\subsection{Background on group theory}\label{sec:groupth}
 We introduce concisely the concepts and results of group theory necessary to this paper. The authors can refer for example to \citep{tung1985group,wijsman1990invariant,Eaton1989} for more details.
\begin{defn}[Group]
	A set $\mathcal{G}$ is said to form a group if there is an operation `*', called group multiplication, such that:
	\begin{enumerate}
		\item For any $a,b\in \mathcal{G}$, $a*b\in \mathcal{G}$.
		\item The operation is \textit{associative}: $a*(b*c)=(a*b)*c$, for all $a,b,c\in\mathcal{G}$,
		\item  There is one identity element $e\in\mathcal{G}$ such that, $g*e=e$ for all $g\in\mathcal{G}$,
		\item  Each $g\in\mathcal{G}$ has an inverse $g^{-1}\in\mathcal{G}$ such that, $g*g^{-1}=e$.
	\end{enumerate}
	A subset of $\mathcal{G}$ is called a subgroup if it is a group under the same multiplication operation.
\end{defn}
The following elementary properties are a direct consequence of the above definition:
$e^{-1}=e$,
$g^{-1}*g=e$,
$e*g=g$, for all $g\in\mathcal{G}$.
\newcommand{\G}{\mathcal{G}}
\newcommand{\inv}[1]{{#1}^{-1}}
\begin{defn}[Topological group]
	\label{def:topogroup}
	A locally compact Hausdorff topological group is a group equipped with a locally compact Hausdorff topology such that:
	\begin{itemize}
		\item $\G\rightarrow \G: x\mapsto \inv{x}$ is continuous,
		\item $\G\times\G\rightarrow \G: (x,y)\mapsto x.y$  is continuous (using the product topology).
	\end{itemize}
	The $\sigma$-algebra generated by all open sets of G is called the Borel algebra of $\G$. 
\end{defn}

\begin{defn}[Invariant measure]
	Let $\G$ be a topological group according to definition~\ref{def:topogroup}. Let $K(\G)$ be the set of continuous real valued functions with compact support on $\G$. A radon measure $\mu$ defined on Borel subsets is left invariant if for all $f\in K(\G)$ and $g\in\G$
	$$
	\int_G f(\inv{g} x)d\mu(x)=\int_G f(x)d\mu(x)
	$$
	Such a measure is called a \textit{Haar measure}.
\end{defn}
A key result regarding topological groups is the existence and uniqueness up to a positive constant of the Haar measure \citep{Eaton1989}. Whenever $\G$ is compact, the Haar measures are finite and we will denote $\mu_\G$ the unique Haar measure such that $\mu_\G(\G)=1$, defining an invariant probability measure on the group.

\subsection{Background on circular convolution}\label{sec:circconv}
We provide first the definition for a one dimensional signal.

Circular convolution of finite sequences and their Fourier analysis are best described by considering the signal periodic. In our developments, whenever appropriate, the signal ${ a}=\{a[k] , k\in [0,\,d-1]\}$ can be considered as $d$-periodic by defining for any $k\in\mathbb{Z}$, $a[k]=a[k']$, whenever $k'\in[0,\,d-1]$ and $k=k'(mod d)$. One way to describe these sequences is then to see them as functions of the quotient ring $\mathbb{Z}_d=\mathbb{Z}/d\mathbb{Z}$.

Given two $d$-periodic sequences ${ a}=\{a[k] , k\in \mathbb{Z}_d\}$, ${ b}=\{a[k] , k\in \mathbb{Z}_d\}$ their circular convolution is defined as
\[
a\ostar b [n]= \sum_{k=1}^d a[k]b[n-k]\,.
\]

Generalization to 2 dimensions is straightforward by periodizing the image along both dimensions, and then applying the 2D formula:
\[
a\ostar b [n,m]= \sum_{k=1}^d a[k,j]b[n-k,m-j]\,.
\]

\subsection{Background on Fourier analysis of discrete signals and images}\label{sec:dft}
The Discrete Fourier Transform (DFT) of a periodic sequence ${ a}=\{a[k] , k\in \mathbb{Z}_d\}$ is defined as
$$
\widehat{\rm a}[n]=\sum_{k\in \mathbb{Z}_d} a[k] e^{-\mathbf{i}2\pi n k/d},\, n \in \mathbb{Z}_d\,.
$$
Note that the DFT of such sequence can as well be seen as a $d$-periodic sequence. Importantly, the DFT is invertible using the formula
$$
a[k]=\sum_{n\in \mathbb{Z}_d} \widehat{a}[n] e^{\mathbf{i}2\pi n k/d},\, n \in \mathbb{Z}_d\,.
$$

By Parseval's theorem, the energy (sum of squared coefficients) of the sequence can be expressed in the Fourier domain by $\|a\|_{2}^2=\frac{1}{d}\sum_{k\in\mathbb{Z}_d} |a[k]|^2d$. 
%To simplify notations, we will denote by $\langle .\rangle$ the integral (which is also the average) of a function over the unit interval $[-\frac{1}{2},\,-\frac{1}{2}]$, such that  $\|{a}\|_{2}^2=\langle |\widehat{\rm a}|^2 \rangle$. 
The Fourier transform can be easily generalized to 2D signals of the form $\{b[k,l] , (k,l)\in \mathbb{Z}^2\}$, leading to a 2D function, 1-periodic with respect to both arguments
$$
\widehat{\rm b}(u,v)=\sum_{k\in \mathbb{Z},l\in \mathbb{Z}} b[k,l] e^{-\mathbf{i}2\pi(uk+vl)},\, (u,v) \in \mathbb{R}^2\,.
$$

In both the 1D and 2D cases, one interesting property of the DFT is that it transforms convolutions into entrywise products. This writes, for the 1D case
$$
\widehat{a\ostar b}=\widehat{\rm a}\cdot\widehat{\rm b}=\{ \widehat{\rm a}[k]\cdot\widehat{\rm b}[k],\,k\in \mathbb{Z}_d\}\,.
$$

In the case of Model~\ref{model:LTLCNN}, this leads to
\begin{equation}\label{eq:circonvSI}
	\widehat{\boldsymbol{X}}=\widehat{k_2 }\cdot \widehat{\boldsymbol{V}}=\widehat{k_2}\cdot \widehat{k_1}\cdot \widehat{\boldsymbol{Z}}\,.
\end{equation}
Now if we compute the power over non-constant frequencies, since the Discrete Fourier Transform (DFT) of $\boldsymbol{Z}$ has modulus one at all frequencies (as dirac impulse), Parseval theorem yields
\begin{equation}
\label{eq:powSI}
\mathcal{P}(\textbf{X})=\frac{1}{d^2}\sum_{i,j} |\widehat{k}_2(i,j)\widehat{k}_1(i,j)|^2=\left\langle |\widehat{k}_2\odot\widehat{k}_1|^2\right\rangle\,,
\end{equation}
where $\langle .\rangle$ denotes averaging over 2D frequencies and $\odot$ is the entrywise product.

\subsection{Background on the discrete stretching group $\mathcal{S}$}\label{app:stretchGrp}
We consider $d$ prime number and the set of integers modulo $d$, $\mathbb{Z}_d=\{0,1,\dots,d-1\}$. Then the subset of non-zero integers $\mathcal{S}=\mathbb{Z}_d^*=\{1,\dots,d-1\}$ equipped with multiplication (modulo $d$), is a commutative cyclic group \footnote{\url{https://en.wikipedia.org/wiki/Multiplicative_group_of_integers_modulo_n}}. As a consequence, we can consider the action of $g\in \mathbb{Z}_d^*$ on the t-uple of signed indices 
$$
\{-d+1,\dots,-1,0,1,\dots,d-1\}\,,
$$
resulting in the t-uple 
$$
\{-g\cdot(d-1),\dots,-g\cdot 1,0,g\cdot 1,\dots,g\cdot (d-1)\}\,.
$$
Because of the multiplicative group structure of $\mathbb{Z}_d^*$, this action operates a permutation of the strictly positive and strictly negative indices, while index zero remains unchanged. Also because of the group structure, this permutation of the t-uple can be inverted by the action of the inverse $g^{-1}$ in $\mathbb{Z}_d^*$.

\michel{add classical results here (wikipedia...)}

\subsection{Additional results for Section~\ref{sec:extrapol}}\label{sec:addresultsSec2}
 \begin{corol}\label{prop:convCOS}
 	Model~\ref{model:LTLCNN} is NF, such that $S_{k_1^*,k_2^*}=S^\Omega_{k_1^*,k_2^*}$ with $\Omega$ the set of invertible $2d+1\times 2d+1$ convolution kernels.
 \end{corol}
 \begin{proof}
 	Taking the steps of the proof of Proposition~\ref{prop:COSequalS}, it is easy to see that $\mathcal{F}_1=\Omega_1=\mathcal{F}_2=\Omega_1$. As a consequence it is also the intersection $\Omega$.
 \end{proof}

 \begin{prop}\label{prop:linFaith}
	Let $\mathcal{S}=\{1,2,...,d-1\}$ be the multiplicative group of integers modulo $d$, $d$ prime number. We index the matrix coordinate from 1, leading to index $\{1,2,...,d-1\}$, such that $g\in \mathcal{S}$ acts on diagonal matrix $A$ by replacing diagonal component of coordinate $k$ by component of coordinate $g\cdot k$. A solution $(A,\,B)$ for Model~\ref{model:ABmodel} is $\mathcal{S}$-equivalent to true model $(A^*,\,B^*)$ if and only if there is a $\lambda>0$ such that $(A,\,B)=(\lambda^{-1}A^*,\, \lambda B^*)$.
\end{prop}
\begin{proof}[Proof sketch]
This follows the steps of Prop.~\ref{prop:convFaith} using the diagonal elements of the matrices instead of the DFT of convolution kernels, and using only the second half of the components (strictly postive frequencies).
\end{proof}

\begin{corol}\label{prop:linCOS}
	Model~\ref{model:ABmodel} is an NF model for which $\Omega$ is the set of square positive definite diagonal matrices, and $S^\Omega_{A^*,B^*}=S_{A^*,B^*}$.
\end{corol}
\begin{proof}
	Let us first characterize $\Omega$.
	Assume $\omega\in \Omega$, then $A\circ\omega=A'$ for $A,A'$ linear transformations associated to diagonal positive definite matrices. Thus $\omega=A^{-1}A'$ is also the canonical linear transformation of a diagonal positive definite matrix. Conversely, assume $\omega$ is diagonal positive definite, then obviously $A\circ\omega$ and $\omega^{-1}\circ B$ are also positive definite. Hence $\Omega$ is (canonically associated to) the group of square diagonal positive definite matrices.
	
	Second, let us show $S_{(A^*,B^*)}\in S_{(A^*,B^*)}^\Omega$ (converse inclusion holds by definition). %Given the true parameters $(A^*,B^*)$, let us first characterize .
	Let $({A},{B})\in S_{(A^*,B^*)}$, then the $k$-th diagonal coefficient satisfies
	\[
	%\tilde{a}_k \tilde{b}_k = a_k^* b_k^* \triangleq c_k^*>0
	AB = A^*B^* \triangleq C^*
	\]
	which implies $A = C^*B^{-1}$ and $B = A^{-1}C^*$, which leads to $A=A^*\omega$ and $B=\omega^{-1}B^*$ for $\omega=B^*B^{-1}\in \Omega$. 
	
	Thus $(A,B)\in S_{(A^*,B^*)}^\Omega$.
\end{proof}

\subsection{Analysis of continuous overparameterized time gradient descent}\label{app:CTGD}
\begin{prop}\label{prop:CTGD}
	Consider the CTGD of problem~(\ref{eq:toyLoss}), from any initial point $(a_0,\,b_0)>0$ the trajectory leaves the quantity $L(a,b)=a^2-b^2$ unchanged and %$(a, \sqrt{a^2+D})$ for some constant $D$, until it 
	converges to the intersection point with  $S_c=S^\Omega_c=\{(a,c/a),a>0\}$. 
\end{prop}

\begin{proof}
	The gradient for objective in equation~\ref{eq:toyLoss} is
	\begin{eqnarray}
	\nabla_a \mathcal{L} &=& -2b(c-ab)\\
	\nabla_b \mathcal{L} &=& -2a(c-ab)
	\end{eqnarray}
	Hence the dynamics of continuous time gradient descent is (assuming a unit learning rate without loss of generality)
	\begin{eqnarray}
	\frac{da}{dt} = -\nabla_a\mathcal{L} &=& 2b(c-ab)\\
	\frac{db}{dt} = -\nabla_b\mathcal{L} &=& 2a(c-ab)
	\end{eqnarray}
	
	Thus the trajectories of this dynamical system satisfy the equation
	\[
	b\frac{db}{dt} = a\frac{da}{dt}\,,
	\]
	implying that $L(a(t),b(t)) = a(t)^2 - b(t)^2$ is a constant along the trajectories. Assuming $b(t)>0$, each trajectory satisfies $b(t) = \sqrt{a^2(t)+D}$ for some constant $D$.
	
	If we restrict ourselves to the domain $b>0$ and $a>0$, stationary points are the element of the hyperbola $S_0=\{(a,b)\in \mathbb{R}^+ \times \mathbb{R}^+,\, b=c/a\}$.
	\qed
\end{proof}

\subsection{Analysis of SGD drift}\label{app:SGD}
We consider SGD as Algorithm~\ref{alg:sgd1} 
\begin{algorithm}
	%\begin{algorithmic}
	Sample $c_n$ from $P_C$\;
	$a_{n+1} \Leftarrow a_{n}-\lambda\nabla_a\ell(c_n;\,(a_n,b_n))$\;
	$b_{n+1} \Leftarrow b_{n}-\lambda\nabla_b\ell(c_n;\,(a_{n},b_n))$\;
	%\end{algorithmic} 
	\caption{SGD\label{alg:sgd1}}
\end{algorithm}
We have the following result.
\begin{prop}\label{prop:SGDbias}
	Assume an initial distribution $A^{(0)}\sim\mathcal{N}(a_0,\sigma'^2)$ and $B^{(0)}\sim\mathcal{N}(b_0,\sigma'^2)$, such that $(a_0,\,b_0)\in S_{c_0}$%\setminus{\{(\sqrt{c_0},\sqrt{c_0})\}}$
	, then after one SGD iteration, the updated values $(A^{(1)},B^{(1)})$ satisfy (using $L(a,b)=a^2-b^2$ as above)
	%\begin{multline*}
	%	$$
	%	\mathbb{E}[L(A^{(1)},B^{(1)})]=
	%	\mathbb{E}[L(A^{(0)},B^{(0)})](1-4\lambda^2(\sigma'^2+\sigma^2))\,.
	%	$$
	$$
	\mathbb{E}[L(A^{(1)},B^{(1)})]=
	\eta\mathbb{E}[L(A^{(0)},B^{(0)})]\,,\,0<\eta<1.
	$$
	%\end{multline*}
\end{prop}
\begin{proof}
	The evolution of $L(a,b)$ during SGD follows the difference equation
	\[
	L(a_{n+1},b_{n+1})=a_{n+1}^2-b_{n+1}^2=(a_{n}^2-b_{n}^2)(1-4\lambda^2(c_n-a_{n}b_{n})^2)
	\]
	by expanding the left hand side and simplifying the expression by exploiting the independence and Gaussianity of $c_n$, $a_{n}$ and $b_{n}$ we get the result.
	\qed 
\end{proof}

\subsection{Asynchronous gradient descent}\label{sec:asgd}
We make a slight change in the gradient update of section~\ref{sec:optToy}  according to Algorithm~\ref{alg:sgd2}, making it asynchronous by updating $a$ before computing the gradient with respect to $b$.
\begin{algorithm}
	%\begin{algorithmic}
	Sample $c_n$ from $P_C$\;
	$a_{n+1} \Leftarrow a_{n}-\lambda\nabla_a\ell(c_n;\,(a_n,b_n))$\;
	$b_{n+1} \Leftarrow b_{n}-\lambda\nabla_b\ell(c_n;(\textcolor{red}{a_{n+1}},\,b_n))$\;
	%\end{algorithmic} 
	\caption{Asynchronous SGD (ASGD)\label{alg:sgd2}}
\end{algorithm}
%\begin{itemize}
%\item Sample $c_n$ from $P_C$,
%\item Compute $a_{n+1}$ based on $(a_n,b_n)$ via gradient of update
%\item Compute $b_{n+1}$ based on $(a_{n+1},b_n)$ via gradient update
%\end{itemize}

Interestingly, the resulting dynamic is again different from both previous
cases. The trajectory drifts along $S_0$ reaching asymptotically the $a$ axis
(see blue simulated sample path Fig.\ref{fig:toyTrajectories}). This generates a
systematic dependency between $a$ and $b$ as the optimization evolves, which is
a property influenced by the detailed optimization procedure. Note that this
form of asynchronous update may be implemented in an actual deep learning algorithm for the sake of parallelization and computational efficiency.

\subsection{SDR analysis of deep generative models}\label{app:deepmethods}
%Considering two successive layers, we take as cause variable $\boldsymbol{V}$ one output activation channel of the first layer (after application of the non-linear activation function), and $\boldsymbol{X}$ the output of the convolution in the second layer (before applying the non-linear activation function). In this way, we will get an SDR computation in line with the theory part, and that accounts for the non-linearity between layers.   

We consider two successive layers. As show on Fig.~\ref{fig:pathways}, a difference with Model~\ref{model:LTLCNN} studied in previous sections, a single layer consists of multiple 2D activation maps, called channels, to which are applied convolutions and non-linearities to yield activations maps forwarded to the next layer.
More precisely, an activation map $\boldsymbol{X}$ (corresponding to one channel in the considered layer) is generated from the $n$
channels' activation maps $\boldsymbol{V}=(\boldsymbol{V}_1,\dots,\boldsymbol{V}_n)$ in the previous layer through the multi-channel kernel $k_2=(k_2^1,\dots, k_2^n)$ as
\begin{align}
\label{eq:convolution}
X &= \sum_{i=1}^n k_2^i \ostar V_i + b\,. %\\
%\label{eq:partial}
%i\in {1,\dots,n}, y_i &= f_i * x_i 
\end{align}
By looking only at ``partial'' activation map $X=k_2^i \ostar V_i$, %defined in  Equation.~(\ref{eq:partial})
we get back to the case of Model~\ref{model:LTLCNN} (up to some additive constant bias). Therefore, unless
specified otherwise, the term \textit{filter} will refer
to partial filters $(f^i)$.

Next, to get an empirical estimate of SDR for the partial filter, we consider its cause-effect formulation in eq.~\ref{eq:sdr} and we estimate the expectation with an empirical average of the batch of samples $(v_i^1,...,v_i^B)$ of size $B$.
\begin{equation}
\label{eq:sdremp}
\rho{(\boldsymbol{V}_i, {k_2^i})}=\frac{\langle\mathbb{E} |\widehat{k}_2^i\odot \widehat{\boldsymbol{V}_i}|^2\rangle  }{\langle \mathbb{E}|  \widehat{\boldsymbol{V}_i}|^2\rangle \langle |\widehat{k}_2^i|^2 \rangle }\approx\frac{\langle\frac{1}{B} |\widehat{k}_2^i\odot \widehat{\boldsymbol{v}_i^k}|^2\rangle  }{\langle \frac{1}{B}|  \widehat{\boldsymbol{v}_i^k}|^2\rangle \langle |\widehat{k}_2^i|^2 \rangle }\,,
\end{equation}
\michel{check the sum over batch samples}
One additional difference with respect to Model~\ref{model:LTLCNN} is a stride parameter $k>1$ interleaving $k-1$ zero-value pixels between each input pixels, along each dimension, before applying the convolution operation, in order to progressively increase the dimension and resolution of the image from one layer to the next. 
As shown in App.~\ref{sec:stridedproof}, this can be easily modeled and leads to a slightly different SDR (eq.~\ref{eq:sdrstride}). 

\begin{figure}
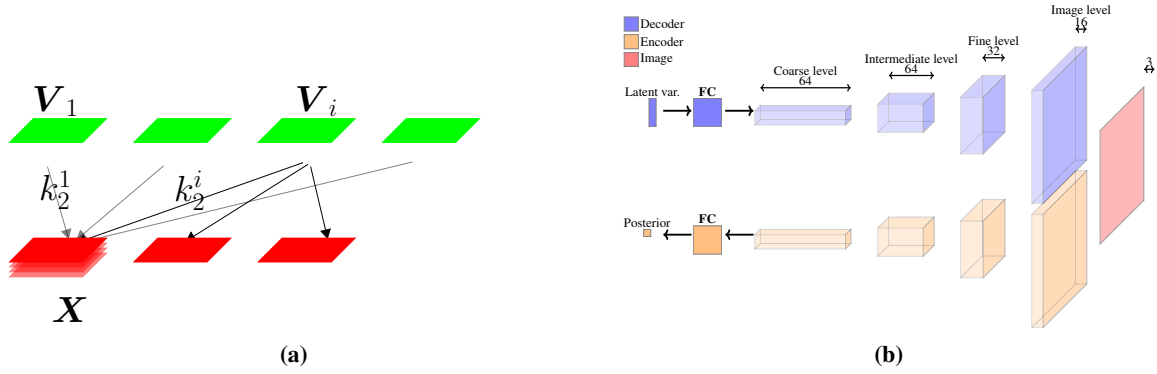

	\begin{subfigure}[t]{.48\linewidth}
		\includestandalone[width=1\linewidth]{pathwaysBetwLayers}
		\subcaption{ \label{fig:pathways}}
	\end{subfigure}
	\begin{subfigure}[t]{.4\linewidth}
		\includestandalone[width=\linewidth]{vaeSchema}
		\subcaption{ 
			%FC indicates a fully connected layer, $z$ is a 100-dimensional isotropic Gaussian vector, horizontal dimensions indicate the number of channels of each layer. The output image size is 64 by 64 pixels and these dimensions drop by a factor 2 from layer to layer. %(reproduced from \citep{radford2015unsupervised}.
			\label{fig:vaeGenStruct}}
	\end{subfigure}
	\caption{(a) Convolution pathways between successive layers. (b) Architecture of the VAE generator.}
\end{figure}

\subsubsection{SDR expression in the strided case}\label{sec:stridedproof}
Striding can be easily modeled, as it amounts to up-sampling the input image before convolution. We denote $.^{\uparrow s}$ the up-sampling operation with integer factor\footnote{$s$ is the inverse of the stride parameter; the latter is fractional in that case} $s$ that turns the 2D activation map $x$ into
\[
\textstyle x^{\uparrow s}[k,l] = \begin{cases}
x[k/s,l/s], &\mbox{$k$ and $l$ multiple of s,}\\
0, &\mbox{otherwise.}
\end{cases}
\]
leading to a compression of the normalized frequency axis in the Fourier domain such that $\widehat{\rm x^{\uparrow s}}(u,v)=\widehat{\rm x}(su,sv)$.
The convolution relation in Fourier domain thus translates to
$\textstyle\widehat{\rm y}(u,v)= \widehat{\rm h}(u,v)\widehat{\rm x}(su,sv)$. As
a consequence, the SDR measure needs to be adapted to up-sampling by rescaling the frequency axis of the activation map with respect to the one of
the filter. Using power spectral density estimates based on  Bartlett's method,
we use a batch of input images of size $B$ leading to $B$ values of activation
map $x$, $x_0,\dots,x_{B-1}$, to obtain the following SDR estimate:
\begin{equation}\label{eq:sdrstride}
\rho_{\{x_i\}\rightarrow f} = \frac{\left\langle \frac{1}{B} \sum_{i=0}^{(B-1)} |{\widehat{\rm f}}(u,v)\widehat{\rm x_i}(su,sv)|^2 \right\rangle }
{\left\langle |\widehat{\rm f} (u,v)|^2 \right\rangle  \left\langle \frac{1}{B} \sum_{i=0}^{(B-1)} |\widehat{\rm x} (u,v)|^2 \right\rangle}\,.
\end{equation}

\subsection{Network hyper-parameters}\label{app:modelhyp}

%\begin{table}[h]
	\captionof{table}{Default network hyper-parameters (they apply unless otherwise stated in main text).\label{tab:netparams}}
	\begin{tabular}{c|cc}
		Architecture & DCGAN & VAE  \\
		\hline
		Nb. of deconv. layers/channels of generator & 4/(128,64,32,16,1) & 4/(128,64,32,16,3) \\
		Size of activation maps of generator & (4,8,16,32) & (8,16,32,64)  \\
		
		Optimization algorithm & Adam ($\beta=0.5$) &Adam ($\beta=0.5$)\\
		Minimized objective & GAN loss & VAE loss (Gaussian posteriors) \\
		batch size & 64 & 64 \\
		Beta parameter & N/A & 0.0005
	\end{tabular}
	
%\end{table}
\subsection{Additional results for Section 3}\label{app:addresOpt}

In order to quantify the effect of the drift induced by SGD as well as the SDR regularization of Sec.~\ref{sec:opt}, we ran 200 simulations with different seeds, for $d=5$. Kernels were initialized with the true solution (normalized to have total energy 1) perturbed by uniformly distributed noise (supported in $[0,1]$). SGD was implemented with a learning rate $\lambda=.01$, and SDR regularization was included by doing an additional gradient descent update of the SDR loss with learning rate $\gamma$ (no SDR optimization amounts to $\gamma=0$). Ability to recover the true solution (up to multiplicative coefficient) was quantified by the absolute cosine distance between the resulting kernels and the true ones in the Fourier domain. 

The results are provided in Fig.~\ref{fig:optimToy}. We observe that, as predicted by the theory in Sec.~\ref{sec:optToy}, the SDR of non-SDR regularized solutions (using only SGD of the least square loss) ultimately drifts towards values largely superior to one, which is prevented by SDR regularization (Fig.~\ref{fig:optimToy}, bottom left panel). Notably, applying an increasingly high SDR regularization drives the SDR faster to 1 (the value for $\mathcal{S}$-genericity). Moreover, the evolution of cosine distances suggest that SDR regularization helps driving the result closer to the true solution (Fig.~\ref{fig:optimToy}, top panels). However, for very large numbers of iteration, quality of the solution decays again, suggesting that SDR regularization may help compensate the drift of solutions only temporarily and that further constraints (e.g. additional regularization with other generic ratio) may be necessary to robustly obtain a beneficial effect of SDR regularization for any number of iterations. These results overall suggest that running SDR regularization for a restricted amount of time may help enforce better extrapolation properties than SGD, which is further supported by our experiments in deep generative models (Sec.~\ref{sec:expDeep}).

\subsection{Additional results for deep models}\label{app:addresDeep}
%\subsubsection{SDR between successive layers}
%We first study the distribution of the SDR statistic between all possible (filter, activation map) pairs in a given layer. The result for the VAE is shown in Fig.~\ref{fig:histSICVAEtrainingComp},
%exhibiting a mode of the SDR close to 1 - the value of ideal spectral independence - for layers of the decoder, which suggests genericity of the convolution kernels between successive layers. 
%Interestingly, the encoder architecture, which implements convolutional layers of the same
%dimensions in reverse order, exhibits a much broader distribution of the SDR at all levels,
%especially for layers encoding lower level image features.  This is in line with results stating 
%\begin{comment}
%presented in Section~\ref{sec:anticausal}, 
%\end{comment}
%that if a mechanism (here the generator) satisfies the principle of independent causal mechanisms, the inverse mechanism (here the encoder) will not \citep{shajarisale2015}.% are more entangled.
%
%In supplemental Fig.~\ref{fig:histSICGANtrainingComp}, we also show the same analysis for a GAN. While genericity is slightly better in the generator, it is also rather good for the discriminator, in line with the fact that the discriminator does not perform an inverse mapping of the generator. Overall, results support ICM is achieved to some extent in vanilla generator architectures. Next, we thus can investigate extrapolation capabilities formalized in Section~\ref{sec:metaclass} by perturbing hidden layers.

\subsubsection{Multiscale analysis of stretching perturbations}
We also used a discrete Haar wavelet
transform of the images to isolate the contribution of each scale to the image \citep{mallat1999wavelet}. We then computed the mean squared error (MSE) resulting from the above differences over all wavelet coefficients at a given scale over $64$ sample images. The resulting histograms for each perturbed layer on Fig.~\ref{fig:level_scale10000} shows that the mismatch is stronger at scales corresponding the depth of the distorted layer. The same analysis is provided at 50000 training iteration of the VAE training on Fig.~\ref{fig:level_scale}, and 50000 training iterations including SDR optimization on Fig.~\ref{fig:level_sdr_scale}.
\subsubsection{Extrapolation of specific features}
To justify that extrapolation as introduced in Sec.~\ref{sec:metaclass} and illustrated in Example~\ref{expl:eyegen} is relevant in the context of deep generative models, we show now apply stretching to specific visual features. For that we rely on the approach of \citet{besserve2018counterfactuals} to identify modules of channels in hidden layers that encode specific properties of the output images in a disentangled way. We apply this procedure on the VAE described above, and identify a group of channels distributed across hidden layers encoding eyes. We then applied the horizontal stretch described in previous sections, but only to activations of the channels in the intermediate layer that belong to the module encoding properties of the eyes. The resulting counterfactual samples, shown on Suppl. Fig.~\ref{fig:extrapolFace} (top panel), exhibit faces with disproportionate eyes, in the vein of the deformations that illustrators often apply to fictional characters that can be observed for examples in cartoons or animation movies.

\newpage

\section{Supplemental figures}

\begin{figure}[h]


	\centering
	\includestandalone[width=.6\linewidth]{manifold}
	\caption{Illustration of meta-class (see Section~\ref{sec:metaclass}), and lack of $\mathcal{G}$-equivalence: although the two solutions in $\mathcal{F}$ both generate the true distribution in $\mathcal{C}$, applying group transformations leads to different meta-classes. \label{fig:manifold}}
\end{figure}

%\begin{figure}
%	\begin{subfigure}[t]{.48\linewidth}
%		\includestandalone[width=1.1\linewidth]{figures/pathwaysBetwLayers}
%		\subcaption{ \label{fig:pathways}}
%	\end{subfigure}
%	\begin{subfigure}[t]{.48\linewidth}
%		%\includegraphics[width=.5\textwidth]{figures/geneSchemeRadford2015}
%		\includestandalone[width=\linewidth]{figures/vaeSchema}
%		\subcaption{ 
%			%FC indicates a fully connected layer, $z$ is a 100-dimensional isotropic Gaussian vector, horizontal dimensions indicate the number of channels of each layer. The output image size is 64 by 64 pixels and these dimensions drop by a factor 2 from layer to layer. %(reproduced from \citep{radford2015unsupervised}.
%			\label{fig:vaeGenStruct}}
%	\end{subfigure}
%	\caption{(a) Convolution pathways between successive layers. (b) Architecture of the VAE generator.}
%\end{figure}

\begin{figure*}[h]
	\centering
	\begin{subfigure}[t]{0.19\linewidth}
		\includegraphics[scale=0.169]{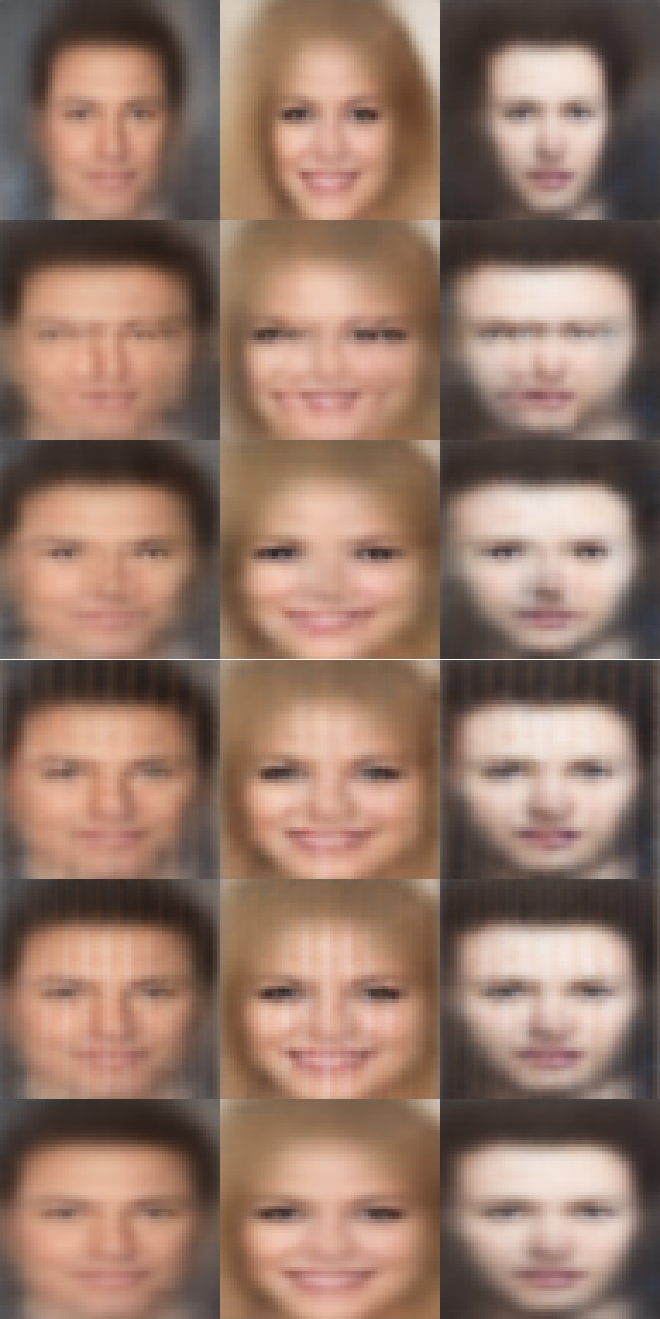}
		\caption{\label{fig:level_def10000}}
	\end{subfigure}
	\hfill
	\begin{subfigure}[t]{0.19\linewidth}
		\includegraphics[scale=0.08]{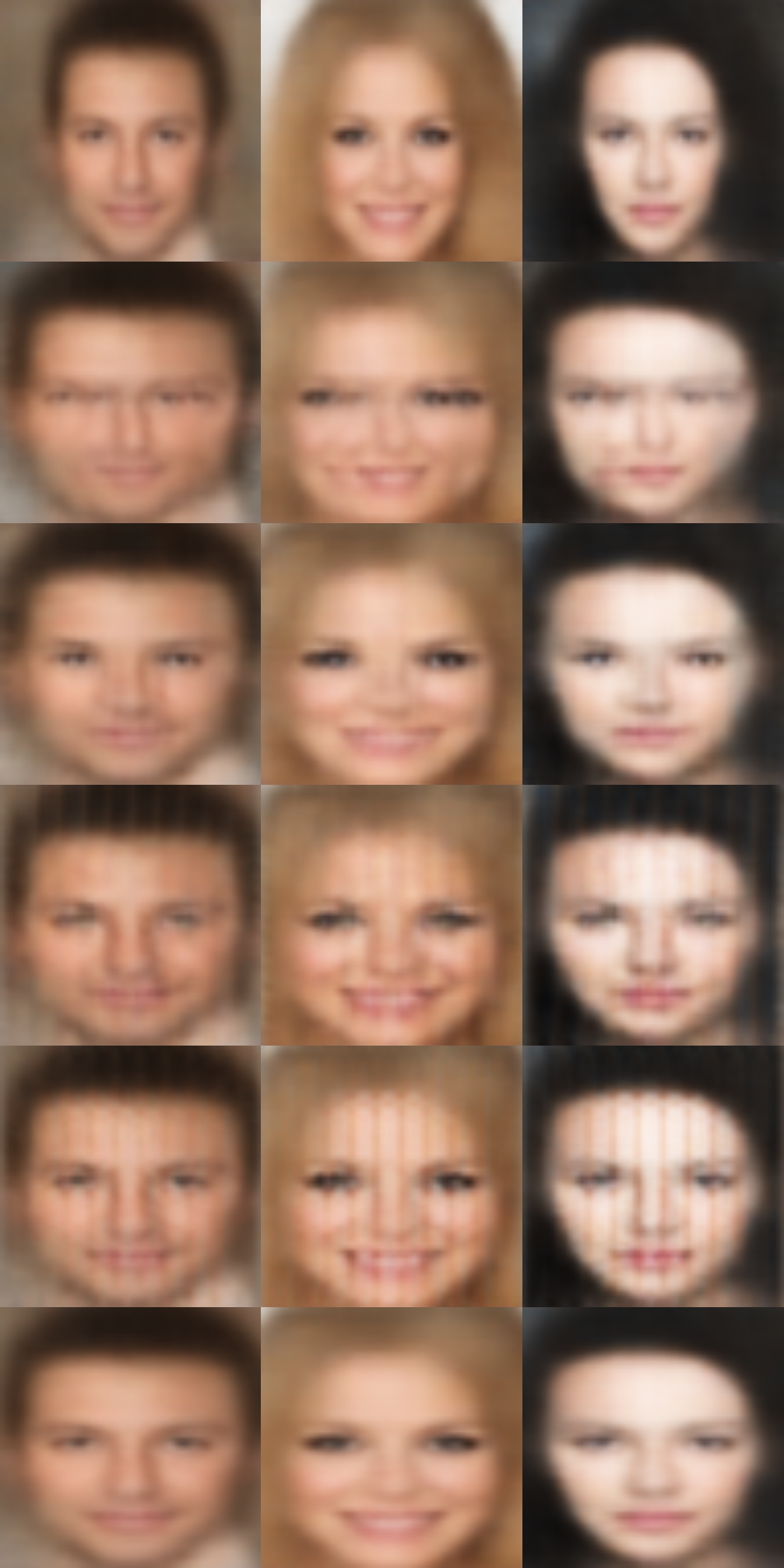}
		\caption{\label{fig:level_def}}
	\end{subfigure}
	\hfill
	\begin{subfigure}[t]{0.19\linewidth}
		\includegraphics[scale=0.08]{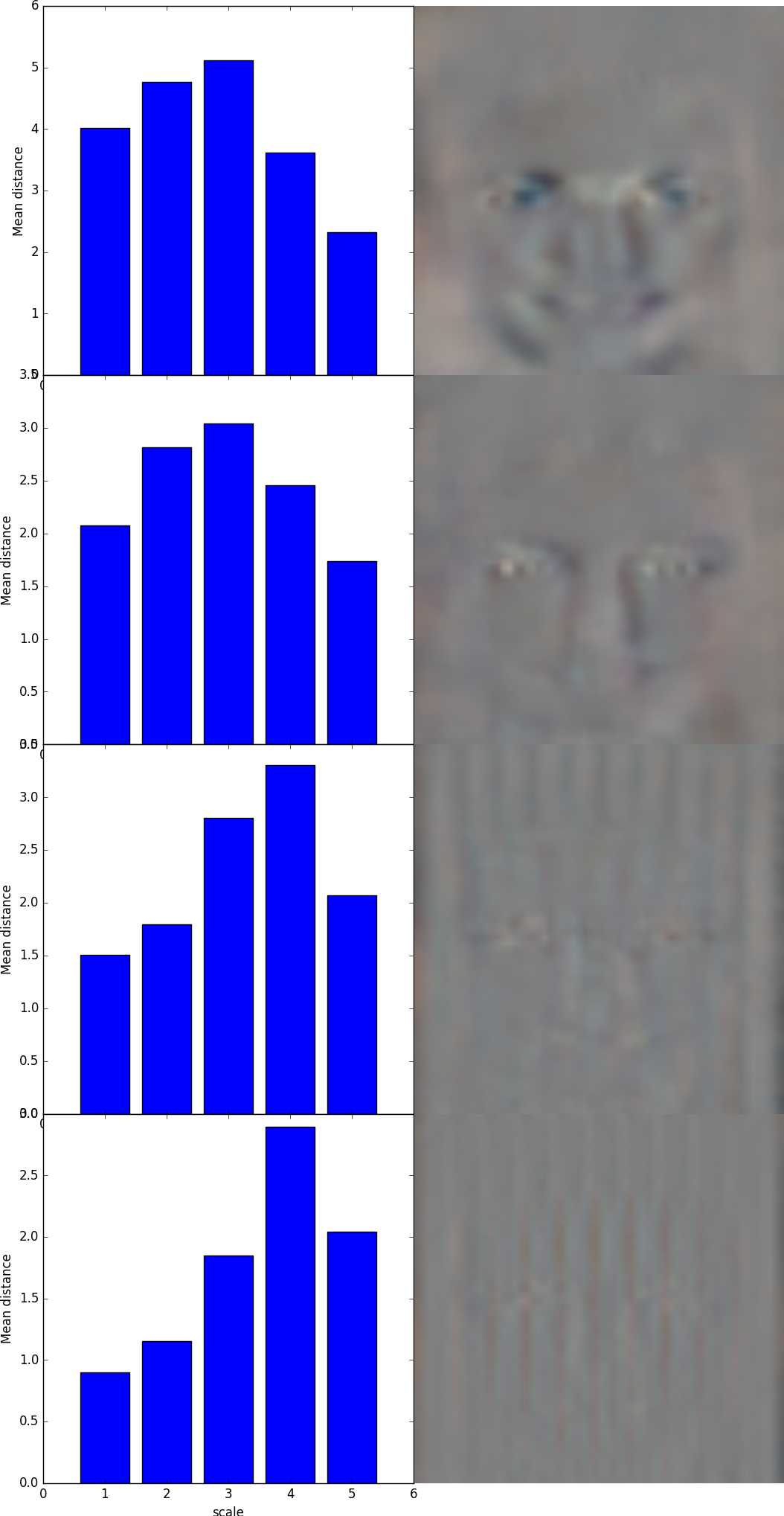}
		\caption{\label{fig:level_scale10000}}
	\end{subfigure}
	\hfill
	\begin{subfigure}[t]{0.19\linewidth}
		\includegraphics[scale=0.112]{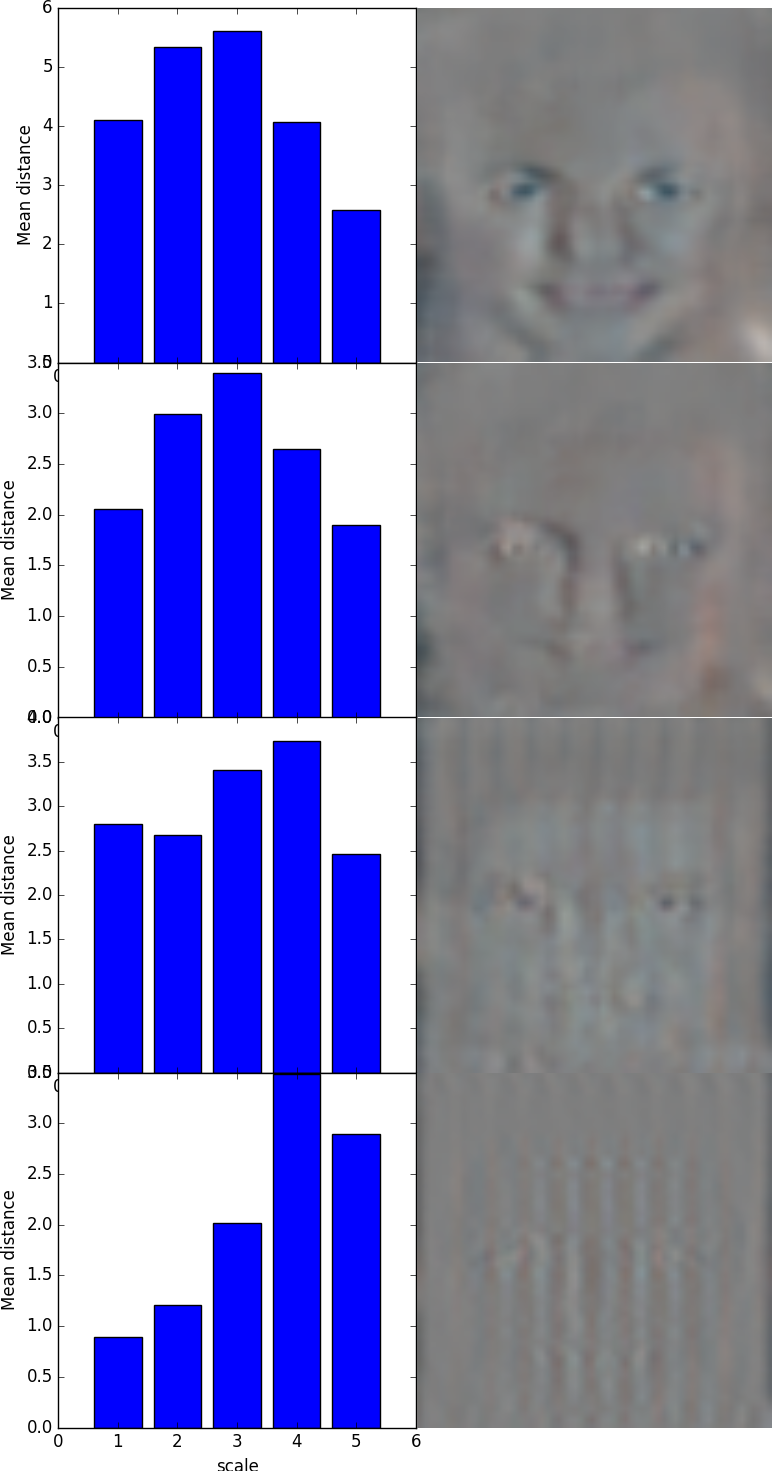}
		\caption{\label{fig:level_scale}}
	\end{subfigure}
	\hfill  \begin{subfigure}[t]{0.19\linewidth}
		\includegraphics[scale=0.08]{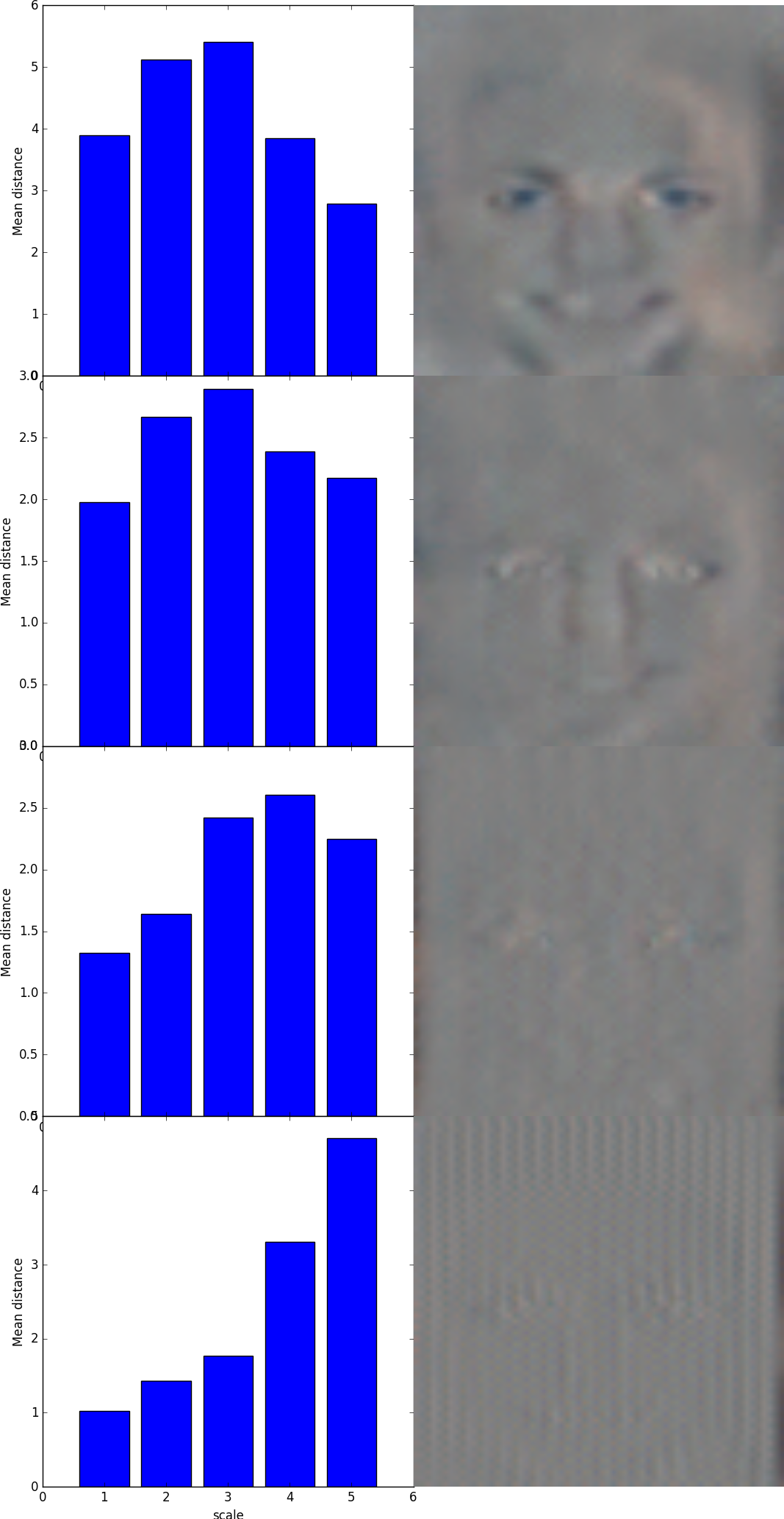}
		\caption{\label{fig:level_sdr_scale}}
	\end{subfigure}
	\caption{VAE distortion experiment. (a) From top to bottom: normal image generated by the VAE after 10000 training iterations, image resulting from distorting
		coarse/intermediate/fine/image level layer, stretched original. (b) Same as a, for 50000 training iterations. (c) Quantitative analysis for 10000 iterations. Left: Distortion levels at different wavelet scales (1:coarsest, 5:finest).
		Right: Difference with original stretched. From top to bottom: perturbation on coarse, intermediate, fine and image level, respectively. (d) Same as c for 50000 iterations. (e) Same as c but \textbf{with SDR optimization}. \label{fig:extVAEpert}}
\end{figure*}

%
%\begin{figure*}
%%	\begin{minipage}{.7\linewidth}
%		\includegraphics[width= \linewidth]{figures/VAEdistortv2.pdf}
%%	\end{minipage}\hfill
%%	\begin{minipage}{.24\linewidth}
%		\caption{\textbf{VAE stretching extrapolations}. (a-b) VAE stretched samples (see text). (c-d) Left: Distortion energy at different scales (1:coarsest, 5:finest). Right: pixel difference. (f) Evolution of MSE when stretching at the fine level, with and without {SDR optimization}\label{fig:distortSupp}}
%	%\end{minipage}
%\end{figure*}

\begin{figure}
	\centering
	\includegraphics[width=\linewidth]{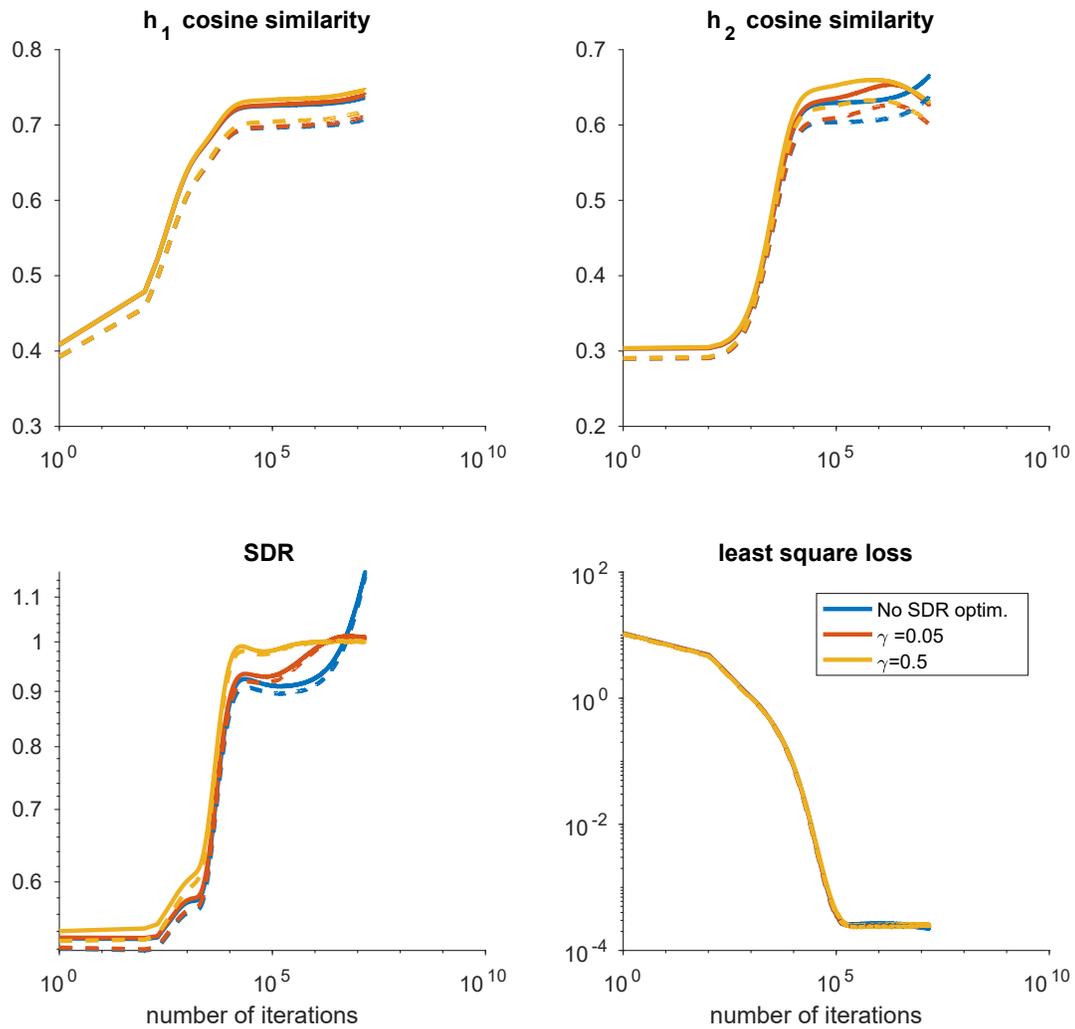}
	\caption{Evolution of the optimization of Model~\ref{model:LTLCNN} without SDR regularization (blue), and with SDR regularization (magenta and yellow, with different regularization weights $\gamma$). To represent the variability of the results for all plots and optimization types, the solid plot represents the mean plus standard error and the dashed plot represents mean minus standard error. \label{fig:optimToy}}
\end{figure}

\begin{figure}[h]
	\centering
	\hspace*{-1.8cm}\includegraphics[width=1.2\columnwidth,height=2.5cm]{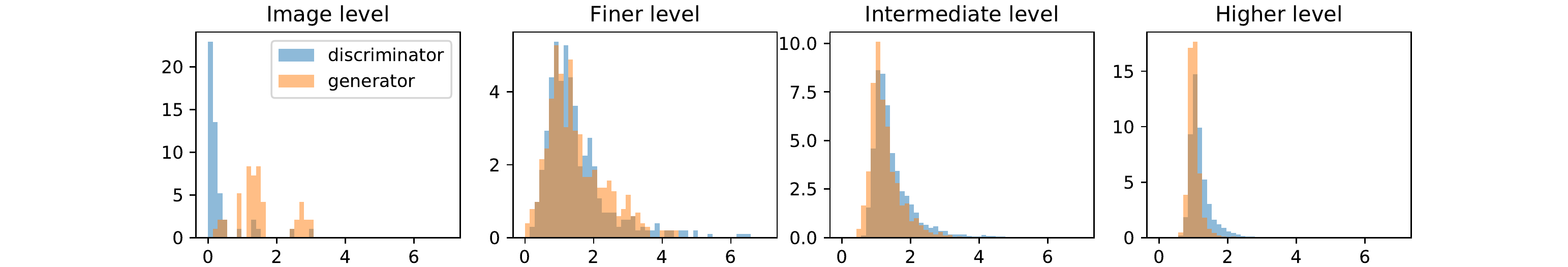}
	\caption{Superimposed SDR histograms of trained GAN generator and discriminator.\label{fig:histSICGANtrainingComp}}
\end{figure}

\begin{figure}[h]
	\centering
	\includegraphics[width=.75\linewidth]{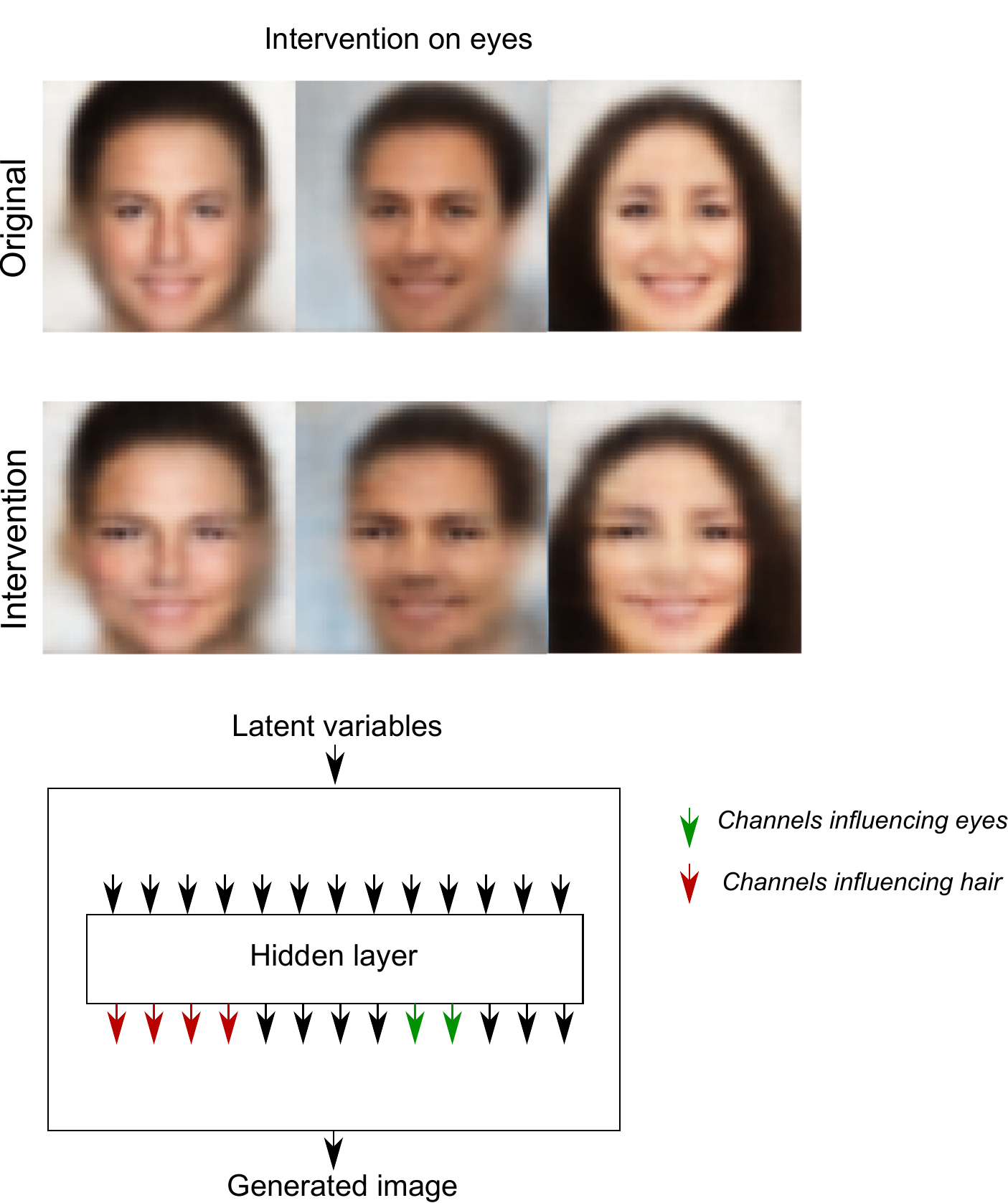}
	\caption{Targeted extrapolations. Top panel: Example extrapolated VAE samples generated by applying two different manipulations to hidden layers. Top panel: stretching of the intermediate layer channels' activations encoding the eyes. Bottom panel: illustration of the channels identified by the method of \citet{besserve2018counterfactuals} in hidden layers, used to perform targeted extrapolations. \label{fig:extrapolFace}}
\end{figure}

%\fi
\end{document}